\newtheorem{theorem}{Theorem}
\newtheorem{lemma}{Lemma}
\newtheorem{definition}{Definition}
\newtheorem{remark}{Remark}
\newcommand{\ANM} [1]{}
\def\ddeg{d_{deg}}
\setlist[enumerate]{leftmargin=.5in}
\setlist[itemize]{leftmargin=.5in}
\title{Fundamental Limits of Deep Graph Convolutional Networks\thanks{This paper was presented in part at the 2020 IEEE Symposium on Information Theory~\cite{magnerISIT2020}.}}% from Samples}
\author{
\IEEEauthorblockN{Abram Magner, Mayank Baranwal and
Alfred O. Hero III} %,  %\IEEEmembership{Fellow, IEEE,}}
\thanks{

Abram Magner (email: amagner@albany.edu) is with the Department of Computer Science, University at Albany, SUNY, Albany, NY, USA.  Mayank Baranwal (email: mayankb@umich.edu) and Alfred Hero (email: hero@eecs.umich.edu) are with the Department of EECS, University of Michigan, Ann Arbor, MI, USA.

This research was partially supported by grants from ARO W911NF-19-1026, ARO W911NF-15-1-0479, and ARO W911NF-14-1-0359 and the  Blue Sky Initiative from the College of Engineering at the University of Michigan.
}}
\def\Ind{\mathds{I}}
\def\Pr{{\mathbb{P}}}
\def\E{{\mathbb{E}}}
\def\Gr{\mathcal{G}}
\def\R{{\mathbb{R}}}
\def\N{{\mathbb{N}}}
\def\Bernoulli{\mathrm{Bernoulli}}
\def\epsres{\epsilon_{res}}
\def\symmdiff{\triangle}
\def\Vol{\mathrm{Vol}}
\def\Cu{\mathbb{H}}
\def\SBM{\mathrm{SBM}}
\def\Param{\mathcal{P}}
\def\One{\mathbf{1}}
\def\Act{\mathcal{A}}
\def\lintersect{\cap}
\def\lunion{\cup}
\algnewcommand{\LeftComment}[1]{\Statex \(\triangleright\) #1}
\begin{document}
\maketitle
\thispagestyle{plain}
%\thanks{
%
%Abram Magner (email: amagner@albany.edu) is with the Department of Computer Science, University at Albany, SUNY, Albany, NY, USA.  Mayank Baranwal (email: mayankb@umich.edu) and Alfred Hero (email: hero@eecs.umich.edu) are with the Department of EECS, University of Michigan, Ann Arbor, MI, USA.
%
%This research was partially supported by grants from ARO W911NF-19-1026, ARO W911NF-15-1-0479, and ARO W911NF-14-1-0359 and the  Blue Sky Initiative from the College of Engineering at the University of Michigan.
%}}

%\maketitle

%\thispagestyle{plain}
%%%%%%%%%%%%%%%%%%%%%%%%%%%%%%%%%%%%%%%%%%%%
\begin{abstract}
Graph convolutional networks (GCNs) are a widely used method for
graph representation learning.  
To elucidate the capabilities and limitations of GCNs, we investigate their power, as a function of their number of
layers, to distinguish between different
random graph models (corresponding to different class-conditional distributions in a classification problem) 
on the basis of the embeddings of their sample
graphs.  In particular, the graph models that we consider arise from
graphons, which are the most general possible parameterizations of
infinite exchangeable graph models and which are the central objects of study
in the theory of dense graph limits.  We give a precise characterization of the
set of pairs of graphons that are indistinguishable by a GCN with nonlinear activation
functions coming from a certain broad class if its depth is at least logarithmic
in the size of the sample graph.  This characterization is in terms of a degree
profile closeness property.  Outside this class, a very simple GCN architecture
suffices for distinguishability.  We then exhibit a concrete, infinite class
of graphons arising from stochastic block models that are well-separated in terms of cut distance and are indistinguishable by a GCN. 
These results theoretically match empirical observations of several
prior works.  
To prove our results, we exploit a connection to random
walks on graphs.  Finally, we give empirical results on synthetic and real graph
classification datasets, indicating that indistinguishable graph distributions
arise in practice.

    \noindent\textbf{Key words:} graph convolutional network, representation learning, graphon, deep learning, hypothesis testing, mixing time
\end{abstract}

%%%%%%%%%%%%%%%%%%%%%%%%%%%%%%%%%%%%%%%%%%%%%
\section{Introduction}
\label{Introduction}

In applications ranging from drug discovery and design \cite{sun2019graph} to proteomics \cite{randic2002comparative} to neuroscience \cite{sporns2003graph} to social network analysis \cite{barnes1983graph}, inputs to machine learning methods often
take the form of graphs.  In order to leverage the empirical success of deep
learning and other methods that work on vectors in finite-dimensional Euclidean spaces for supervised learning tasks in this domain, a plethora of graph representation learning schemes have
been proposed and used~\cite{Hamilton2017RepresentationLO}.
Among these, one method is the \emph{graph convolutional network}
(GCN) architecture~\cite{kipfwelling,vandergheynst}.  
%\AM{Briefly explain GCNs.}
A graph convolutional network works by associating with each node of an input
graph a vector of features and passing these node features through a sequence of
\emph{layers}, resulting in a final set of node vectors, called node embeddings.
To generate a vector representing the entire graph, these final embeddings are 
sometimes averaged.  Each layer of the network consists of a graph diffusion step,
where a node's feature vector is averaged with those of its neighbors; a feature
transformation step, where each node's vector is transformed by a weight matrix;
and, finally, application of an elementwise nonlinearity such as the ReLU or
sigmoid activation function.  The weight matrices are trained from data, so that the metric 
structure of the resulting
embeddings are (one hopes) tailored to a particular classification task.

%\AM{Transition to the goals of our paper.}
While GCNs and other graph representation learning methods have been successful
in practice, numerous theoretical questions about their capabilities and the
roles of their hyperparameters remain.  In this paper, we establish fundamental limits on the
ability of GCNs to distinguish between classes of graphs.
%samples from different random graph models.
We focus on the roles that the number of layers and the presence or absence
of nonlinearity play.  These results are obtained using random graph models; instances of graphs from each class are assumed to be random
samples from a distribution, i.e., a model for each graph class.
%choice of
%activation functions play. 
The random graph models that we consider are those that
are parameterized by \emph{graphons}~\cite{lovaszbook}, which are functions from
the unit square to the interval $[0, 1]$ that essentially encode edge density
among a continuum of vertices.  Graphons are the central objects of study in the
theory of dense graph limits and, by the Aldous-Hoover theorem~\cite{aldous-exchangeable} exactly parameterize the class of infinite
exchangeable 
random graph models -- those models whose samples are invariant in distribution 
under permutation of vertices.

%%%%%%%%%%%%%%%%%%%%%%
\subsection{Prior Work}
%\AM{Prior work on graph representation learning and GCNs.
%Mayank: Add stuff here!}
A survey of modern graph representation learning methods is provided in~\cite{Hamilton2017RepresentationLO}.
Graph convolutional networks were first introduced in~\cite{vandergheynst}, and since then, many variants have been proposed.
For instance, the polynomial convolutional filters in the original work
were replaced by linear convolutions~\cite{kipfwelling}. Authors in \cite{ruiz2019gated} modified the original architecture to include gated recurrent units for working with dynamical graphs. These and other variants have been used in various applications, e.g.,~\cite{jepsen2019graph,coley2019graph,yao2019experimental,duvenaud2015convolutional}.

%\AM{Prior theoretical work on GCNs.}
Theoretical work on GCNs has been from a variety of perspectives.
In~\cite{verma2019stability}, the authors investigated the generalization and stability properties of GCNs.  Several works,
including~\cite{morris2019weisfeiler, chen2019equivalence, xu2018powerful}, have drawn connections between the representation
capabilities of GCNs and the distinguishing ability of the \emph{Weisfeiler-Lehman} (WL) algorithm for graph isomorphism testing~\cite{Weisfeiler1968ReductionOA}.  These
papers drawing comparisons to the WL algorithm implicitly study the injectivity
properties of the mapping from graphs to vectors induced by GCNs.  However, they
do not address the metric/analytic properties, which are important in consideration
of their performance as representation learning methods~\cite{arorarepresentationlearning}.  Finally, at least one work has considered the performance of untrained GCNs on community detection~\cite{Kawamoto2018}. { The authors of that paper provide a heuristic calculation based on the mean-field approximation from statistical physics and demonstrate through numerical experiments the ability of untrained GCNs to detect the presence of clusters and to recover the ground truth community assignments of vertices in the stochastic block model. They empirically show that the regime of graph model parameters in which an untrained GCN is successful at this task agrees well with the analytically derived detection threshold. The authors also conjecture that training GCNs does not significantly improve their community detection performance. }

%\AM{Prior work on graphons -- fundamental framework + papers on how to estimate them.  Make sure to explain how graphons are the most general
%possible parametrizations of exchangeable random graph models.}
The theory of graphons as limits of dense graph sequences was
initiated in~\cite{LovaszGraphLimits} and developed by various
authors~\cite{BorgsChayes2008,BorgsChayesII}.
%\AM{Cite some stuff by Borgs, Chayes, Vesztergombi, et al.}
For a comprehensive treatment of graph limit theory, see~\cite{lovaszbook}.
Several authors have investigated the problem of estimation of
graphons from samples~\cite{ChanAiroldi2014,gao2015,Klopp2019}.  Our work is complementary
to these, as our main focus here is to characterize the capabilities of
GCNs and their hyperparameters, and the problem of distinguishing graphons
is only a convenient tool for doing so.

%%%%%%%%%%%%%%%%%%%%%%%%%%%%%%%%%
\subsection{Our Contributions}
We characterize the ability of GCNs of moderate depth to distinguish between pairs
of graphons.  This ability is characterized in terms of the total variation distance between 
their random walk stationary distributions.  This leads to the notion of
\emph{$\delta$-exceptional pairs of graphons} -- that is, pairs of graphons whose vertex degree distributions are separated in norm by at most $\delta$.  In particular, our Theorem~\ref{thm:convergence-result} gives an upper bound on the $L_{\infty}$ distance between graph embedding vectors from two $\delta$-exceptional graphons in terms of $\delta$.  Theorem~\ref{thm:prob-err-lower-bound} then gives a lower bound on the probability of error of \emph{any} test that
attempts to distinguish between two graphons based on slightly perturbed $K$-layer GCN embedding 
matrices of sample graphs of size $n$, provided that $K = \Omega(\log n)$. 
Here, the constant hidden in the $\Omega(\cdot)$ depends on the mixing
times of the random sample graphs drawn from the distributions induced by the two graphons.

We show that our theorems are not vacuous by exhibiting,
in Theorem~\ref{thm:sbm-exceptionality}, a family of 
$0$-exceptional graphon pairs (more specifically, stochastic block model
pairs) and show that their limiting GCN embedding vectors converge to 
the same point and are thus indistinguishable.

%a lower bound on the probability of error of \emph{any} test that
%attempts to distinguish between two graphons based on slightly perturbed $K$-layer GCN embedding 
%matrices of sample graphs of size $n$, provided that $K = \Omega(\log n)$.  In particular, we exhibit a family of pairs of graphons that are hard for any test to distinguish on the basis of these embeddings.  This is the content of Theorems~\ref{thm:convergence-result} and \ref{thm:prob-err-lower-bound}.

We then show a converse achievability result in Theorem~\ref{thm:achievability} that says, roughly, that provided that the number of layers is
sufficiently large ($K = \Omega(\log n)$), there exists a \emph{linear} GCN architecture 
with a very simple sequence of weight
matrices and a choice of initial embedding matrix such that $\delta$-separated pairs of graphons (i.e., graphon pairs that are not $\delta$-exceptional) 
%whose expected 
%degree statistics
%differ by a sufficiently large amount 
are distinguishable based on the noise-perturbed GCN embeddings of their sample
graphs.  In other words, this indicates that the family of graphon pairs in
the previous theorems is the \emph{only} family for which
distinguishability by GCNs is impossible.

%sort of
%case in which a nonlinear GCN architecture could be necessary (though, as Theorem~\ref{thm:prob-err-lower-bound} shows, for several choices of activation functions,
%these graphons are still indistinguishable).
Our proofs rely on concentration of measure arguments and techniques from the theory
of Markov chain mixing times and spectral graph theory~\cite{LevinPeresWilmer2006}.

Finally, we empirically demonstrate our results on synthetic and real
datasets.  The synthetic dataset consists of pairs of stochastic block 
models from the family described in Theorem~\ref{thm:sbm-exceptionality}.
The real dataset is the MUTAG network dataset, which is a set of nitro chemical compounds divided
into two classes~\cite{debnath1991structure} based on their mutagenicity
properties.  We show that the typical degree statistics of the two classes
are hard to distinguish in the sense of $\delta$-separation.  Therefore, our theorems predict that a trained
GCN will perform poorly on the classification task.  Our empirical results demonstrate the
veracity of our theoretical predictions.

%This turns out to be
%the case, as demonstrated in our numerical experiments.

%%%%%%%%%%%%%%%%%%%%%%%%%%%%%%%%%%%%%%%%%
\subsubsection{Relations between probability of error lower and upper bounds}
Our probability of classification error lower bounds give theoretical backing to a phenomenon that
has been observed empirically in graph classification problems: adding arbitrarily
many layers (more than $\Theta(\log n)$) to a GCN can substantially degrade 
classification performance.  This is an implication of Theorem~\ref{thm:prob-err-lower-bound}. %\ref{thm:prob-err-identities}, \ref{thm:prob-err-arbitrary}, and \ref{thm:graphon-lb-generalization}.
On the other hand, Theorem~\ref{thm:achievability} shows that this is \emph{not} always
the case, and that for \emph{many} pairs of graphons, adding more layers improves
classification performance.  We suspect that the set of pairs of graphons for which
adding arbitrarily many layers does not help forms a set of measure $0$, though this
does not imply that such examples never arise in practice.

The factor that determines whether or not adding layers will improve or degrade performance of a GCN in distinguishing between two graphons $W_0$ and $W_1$ is the distance between the stationary distributions
of the random walks on the sample graphs from $W_0$ and $W_1$. 
This, in turn,
is determined by the normalized degree profiles of the sample graphs.

\section{Notation and Model}
\subsection{Asymptotic notation and norms}
We will
frequently use two particular norms: the $\ell_{\infty}$ norm for vectors and
matrices, which is the maximum absolute entry; and the operator norm induced
by $\ell_{\infty}$ for matrices: for a matrix $M$, 
\begin{align}
    \| M \|_{op,\infty}
    = \sup_{v ~:~ \|v\|_{\infty}=1} \| Mv\|_{\infty}.
\end{align}

We will also use standard Landau asymptotic notation.  I.e., for two functions $f, g:\R\to\R$ or similar,
we say that $f(x) = O(g(x))$ as $x\to x_0$ if there exists a constant $C > 0$ such that
$|f(x)| \leq C|g(x)|$ for all $x$ sufficiently close to $x_0$.  We say that $f(x) = \Omega(g(x))$ if $g(x) = O(f(x))$.  We say that $f(x) = \Theta(g(x))$ if $f = O(g)$ and
$f = \Omega(g)$.  We say that $f(x) = o(g(x))$ if $\lim_{x\to x_0} |f(x)/g(x)| = 0$,
and we say that $f(x) = \omega(g(x))$ if $g(x) = o(f(x))$.

%%%%%%%%%%%%%%%%%%%%%%
\subsection{Graph Convolutional Networks}
We start by defining the model and relevant notation.  
\ANM{Asymptotic and norm notation.}
A $K$-layer
graph convolutional network (GCN) is a function mapping graphs to vectors over
$\R$.  It is
parameterized by a sequence of $K$
\emph{weight matrices} $W^{(j)} \in \R^{d\times d}$, $j \in \{0, ..., K-1 \}$, where $d \in \N$ is the \emph{embedding dimension}, a hyperparameter.
%\AM{Explain what happens to a graph, starting with the initial embedding, and
%explain what each layer does.}
From an input graph $G$ with adjacency matrix $A$ and random walk matrix $\hat{A}$,
and starting with an initial embedding matrix $\hat{M}^{(0)}$, the $\ell$th
embedding matrix is defined as follows:
\begin{align}
    \hat{M}^{(\ell)}
    =  \sigma(\hat{A} \cdot \hat{M}^{(\ell-1)} \cdot W^{(\ell-1)}),
    \label{GCNRecurrence}
\end{align}
where $\sigma:\R\to\R$ is a fixed nonlinear \emph{activation function}
and is applied element-wise to an input matrix.  An \emph{embedding vector} $\hat{H}^{(\ell)} \in \R^{1\times d}$ is then produced by averaging the rows of $\hat{M}^{(\ell)}$:
\begin{align}
    \hat{H}^{(\ell)}
    = \frac{1}{n} \cdot \mathbf{1}^T \hat{M}^{(\ell)}.
\end{align}

Typical examples of activation functions in neural
network and GCN contexts include
\begin{itemize}
    \item 
        Rectified linear unit (ReLU): $\sigma(x) = x\cdot I[x > 0]$.
    \item
        Sigmoid function: $\sigma(x) = \frac{1}{1+e^{-x}}$.
    \item
        Hyperbolic tangent: $\sigma(x) = \tanh(x)$.
\end{itemize}
Empirical work has given evidence that the performance of GCNs on certain
classification tasks is unaffected by replacing nonlinear activation functions
by the identity~\cite{Wu2019SimplifyingGC}.

Frequently, $\hat{A}$ is replaced by either the normalized adjacency matrix
$D^{-1/2}AD^{-1/2}$, where $D$ is a diagonal matrix with the degrees of the
vertices of the graph on the diagonal, or some variant of the Laplacian matrix 
$D - A$.  For simplicity, we will consider in this paper only the choice of 
$\hat{A}$.

The defining equation (\ref{GCNRecurrence}) has the following interpretation:
multiplication on the left by $\hat{A}$ has the effect of replacing each node's
embedding vector with the average of those of its neighbors.  Multiplication on
the right by the weight matrix $W^{(\ell-1)}$ has the effect of replacing each
coordinate (corresponding to a feature) of each given node embedding vector with a 
linear combination of values of the node's features in the previous layer.

%%%%%%%%%%%%%%%%%%%%%%%%%%
\subsection{Graphons}
In order to probe the ability of GCNs to distinguish random graph models
from samples, we consider the task of distinguishing random graph models
induced by graphons.  A graphon $W$ is a symmetric, Lebesgue-measurable
function from $[0, 1]^2 \to [0, 1]$.  To each graphon is associated
a natural exchangeable random graph model as follows: to generate a graph
on $n$ vertices, one chooses $n$ points $x_1, ..., x_n$ (the \emph{latent vertex positions}) uniformly
at random from $[0, 1]$.  An edge between vertices $i, j$ is independent
of all other edge events and is present with probability $W(x_i, x_j)$.
We use the notation $G \sim W$ to denote that $G$ is a random sample
graph from the model induced by $W$.  The number of vertices will be
clear from context.

One commonly studied class of models that may be defined
equivalently in terms of sampling from graphons is the class of 
stochastic block models.  A stochastic
block model on $n$ vertices with two blocks is parameterized by
four quantities: $k_1, p_1, p_2, q$.  The two blocks of vertices
have sizes $k_1 n$ and $k_2 n = (1-k_1)n$, respectively.  Edges
between two vertices $v, w$ in block $i$, $i \in \{1, 2\}$, 
appear with probability $p_i$, independently of all other edges.
Edges between vertices $v$ in block $1$ and $w$ in block $2$
appear independently with probability $q$.  We will generally
write this model as $\SBM(p_1, p_2, q)$, suppressing $k_1$.

An important metric on graphons is the \emph{cut distance}~\cite{jansongraphons}.  It is induced
by the cut norm, which is defined as follows: fix a graphon $W$.  Then
\begin{align}
    \| W \|_{cut} = \sup_{S,T} \left|\int_{S\times T} W(x, y) ~d\mu(x)~d\mu(y) \right|,
\end{align}
where the supremum is taken over all measurable subsets of $[0,1]$, and the integral is taken with respect to the Lebesgue
measure.  For finite graphs, this translates to taking the pair of subsets $S, T$ of vertices that has the maximum between-subset edge density.
The cut \emph{distance} $d_{cut}(W_0, W_1)$ between graphons $W_0, W_1$ is then defined as
\begin{align}
    d_{cut}(W_0, W_1)
    = \inf_{\phi} \|W_0 - W_1(\phi(\cdot), \phi(\cdot))\|_{cut},
\end{align}
where the infimum is taken over all measure-preserving bijections of $[0, 1]$.  In the case of finite graphs,
this intuitively translates to ignoring vertex labelings.
The cut distance generates the same topology on the space of graphons
as does convergence of subgraph homomorphism densities (i.e., \emph{left convergence}), and so it is an important part of the theory of graph limits.

%%%%%%%%%%%%%%%%%%%%%%%%%%%%%%%%%%%%%%%%%%%%%
\subsection{Main Hypothesis Testing Problem}
We may now state the hypothesis testing problem under consideration.
Fix two graphons $W_0, W_1$.  A coin $B \sim \Bernoulli(1/2)$ is flipped,
and then a graph $G \sim W_B$ on $n$ vertices is sampled.  Next, $G$
is passed through $K=K(n)$ layers of a GCN, resulting in a matrix $\hat{M}^{(K)} \in \R^{n\times d}$
whose rows are node embedding vectors.  The graph embedding
vector $\hat{H}^{(K)}$ is then defined to be $\frac{1}{n}\mathbf{1}^T \hat{M}^{(K)}$.
As a final step, the embedding
vector is perturbed in each entry by adding an independent, uniformly
random number in the interval $[-\epsres, \epsres]$, for a parameter
$\epsres > 0$ that may depend on $n$, which we will typically consider to be $\Theta(1/n)$.  This results in a vector
$H^{(K)}$.  We note that this perturbation step has precedent 
in the context of studies on the performance of neural networks in the presence of numerical 
imprecision~\cite{numericalprecisionShanbhag}.  For our purposes, it will allow
us to translate convergence results to lower bounds on the probability of misclassification error.

Our goal is to study the effect of the number of layers $K$ and presence or absence of nonlinearities on the 
representation properties of GCNs and probability of error of optimal tests $\Psi(H^{(K)})$ that are meant to 
estimate $B$.  Throughout, we will consider the case where $d=n$.

%%%%%%%%%%%%%%%%%%%%%%%%%%%%%%%%%%%%%%%%%%%%%%%
%\newpage
\section{Main Results}
\subsection{Notation and definitions}
To state our results, we need a few definitions.
For a graphon $W$, we define the
degree function $d_W:[0, 1]\to \R$ to be
\begin{align}
    d_W(x)  = \int_{0}^1 W(x, y)~dy,
\end{align}
and define the total degree function
\begin{align}
    D(W) = \int_{0}^1 \int_{0}^1 W(x, y) ~dx~dy.
\end{align}
We will assume in what follows that all graphons $W$ have the property that
there is some $\ell > 0$ for which $W(x, y) \geq \ell$ for all $x, y \in[0,1]$.

The following definition is central to our results characterizing the
set of distinguishable pairs of graphons.
\begin{definition}
    For any $\delta \geq 0$, we say that two graphons $W_0, W_1$ are a $\delta$-\emph{exceptional} pair if
    \begin{align}
        \ddeg(W_0, W_1)
        := \int_{0}^1 \left| \frac{d_{W_0}(\phi(x))}{D(W_0)} - \frac{d_{W_1}(x)}{D(W_1)}  \right| ~dx \leq \delta,
    \end{align}
    for some measure-preserving bijection $\phi:[0, 1]\to [0, 1]$.  If a pair of
    graphons is not $\delta$-exceptional, then we say that they are $\delta$-separated.
\end{definition}

We define the following class of activation functions:
\begin{definition}[Nice activation functions]
    We define $\Act$ to be the class of activation functions $\sigma:\R\to\R$ satisfying
    the following conditions:
    \begin{itemize}
        \item
            $\sigma \in C^2$.
        \item
            $\sigma(0) = 0$, $\sigma'(0) = 1$ and $\sigma'(x) \leq 1$ for all $x$.    
    \end{itemize}
    We call this the class of \emph{nice activation functions}.
\end{definition}
For simplicity, in Theorems~\ref{thm:convergence-result} and \ref{thm:prob-err-lower-bound} below, we will consider activations in the above class; however,
some of the conditions may be relaxed without inducing changes to our results:
in particular, we may remove the requirement that $\sigma'(0) = 1$, and
we may relax $\sigma'(x) \leq 1$ for all $x$ to only hold for $x$ in some
constant-length interval around $0$.
This expanded class includes activation functions such as $\sigma(x) = \tanh(x)$
 and the \emph{swish} and \emph{SELU} functions:
 \begin{itemize}
    \item
        swish~\cite{Hendrycks2017BridgingNA}: $\sigma(x) = \frac{x}{1+e^{-x}}$ 
    \item
        SELU~\cite{klambauer2017self}: 
        %\begin{align}
            $\sigma(x) =  I[x \leq 0] (e^{x}-1) + I[x > 0] x$.
            
%            \begin{cases}
%                e^x - 1 & x \leq 0 \\
%                x & x > 0
%            \end{cases}
        %\end{align}
 \end{itemize}
 
We also make the following stipulation about the parameters of the GCN:
the initial embedding matrices $\hat{M}^{(b,0)}$ (with $b \in \{0, 1\}$) 
and weight matrices
$\{ W^{(j)} \}_{j=0}^{K}$ satisfy
\begin{align}
    \left\| \hat{M}^{(b,0)T} \right\|_{op,\infty} 
    \cdot \prod_{j=0}^K \| W^{(j)T} \|_{op,\infty} \leq C,
\end{align}
and
%\begin{align}
    $\sum_{j=0}^K \| W^{(j)T} \|_{op,\infty} \leq E$,
%\end{align}
for some fixed positive constants $C$ and $E$.
We say that a GCN whose parameters satisfy these bounds is \emph{norm-constrained}.
%\AM{Put the norm condition here!  See Lemma 9 in the draft.}

%%%%%%%%%%%%%%%%%%%%%%%%%%%%%%%%%
\subsection{Statement of results}

\ANM{Convergence results}
%%%%%%%%%%%%%%%%%%%%%%%%%%%%%%%%%%%%%%%%%%%%%%%%%%%%%%%%%%%%%%%%%%%%
\begin{theorem}[Convergence of embedding vectors for a large class of
graphons and for a family of nonlinear activations]\label{thm:thm3-3}
%[Probability of error lower bound for a large class of graphons
%and for a family of nonlinear activations]
    %\label{thm:prob-err-lower-bound}
    \label{thm:convergence-result}
    Let $W_0, W_1$ denote two $\delta$-exceptional graphons, for some fixed
    $\delta \geq 0$.
%    with the following properties:
%    \begin{itemize}
%        \item 
%            There is some fixed $\ell \in (0, 1]$ such that 
%            $W_0(x, y), W_1(x, y) \geq \ell$ for every $x, y \in [0, 1]$.
%        \item
%            The degree profiles (equivalent to the random walk stationary distributions) of sample graphs from $W_0$ and
%            $W_1$ are sufficiently similar.  
%            More formally, we stipulate that
%            \begin{align}
%                \int_{0}^1 \left| \frac{d_{W_0}(\phi(x))}{D(W_0)} - \frac{d_{W_1}(x)}{D(W_1)}  \right| ~dx \leq \delta,
%            \end{align}
%            for some fixed $\delta \geq 0$ and for some measure-preserving bijection $\phi$.
%    \end{itemize}
    
    %\AM{Specify the details of the GCN architecture, including the number of layers,
    %the family of weight matrices, and the family of activation functions.}
    Let $K$ satisfy $D\log n < K$, for some large enough constant $D > 0$
    that is a function of $W_0$ and $W_1$.  
    Consider the GCN with $K$
    layers and output embedding matrix $\hat{M}^{(K)}$, with the additional
    properties stated before the theorem.
%    \begin{itemize}
%        \item
%            The activation function $\sigma$ comes from the class $\Act$.
%        \item
%            The initial embedding matrices $\hat{M}^{(b,0)}$ (with $b \in \{0, 1\}$) 
%            and weight matrices
%            $\{ W^{(j)} \}_{j=0}^{K}$ satisfy
%            \begin{align}
%                \left\| M^{(b,0)T} \right\|_{op,\infty} 
%                \cdot \prod_{j=0}^K \| W^{(j)T} \|_{op,\infty} \leq C,
%            \end{align}
%            and
%            %\begin{align}
%                $\sum_{j=0}^K \| W^{(j)T} \|_{op,\infty} \leq E$,
%            %\end{align}
%            for some fixed positive constants $C$ and $E$.
%            %\AM{Put the norm condition here!  See Lemma 9 in the draft.}
%    \end{itemize}
    %
    %%%%%%%%%%%%%%%%%%%%%%%%%%%%%
    %\paragraph{Convergence of embedding vectors}
    Suppose that $\delta > 0$.
    Then there exists a coupling of the random graphs $G^{(0)} \sim W_0, G^{(1)} \sim W_1$, as $n \to \infty$ such that the embedding vectors $\hat{H}^{(0,K)}$ and
    $\hat{H}^{(1,K)}$ satisfy
    \begin{align}
        \| \hat{H}^{(0,K)} - \hat{H}^{(1,K)} \|_{\infty} \leq  \frac{\delta}{n}(1 + O(1/\sqrt{n}))
        \label{expr:embeddings-convergence-bound-delta-nonzero}
    \end{align}
    with high probability.  
    %Note that the embedding vectors have entries on the order
    %of $\Theta(1/n)$.
    
    If $\delta = 0$, then we have
    \begin{align}
        \| \hat{H}^{(0,K)} - \hat{H}^{(1,K)} \|_{\infty} \leq  O(n^{-3/2 + \text{const}}),
        \label{expr:embeddings-convergence-bound-delta-zero}
    \end{align}
    and for a $1-o(1)$-fraction of coordinates $i$, 
    $%\begin{align}
        | \hat{H}^{(0,K)}_{i} - \hat{H}^{(1,K)}_{i} | = O(1/n^2).
    $%\end{align}
\end{theorem}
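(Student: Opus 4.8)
\emph{Overall strategy.} The plan is to reduce the $K$-layer GCN map to the random walk on the sample graph and then invoke the fact that random graphs drawn from graphons bounded below mix in $O(\log n)$ steps. For a sample graph $G^{(b)} \sim W_b$ with random walk matrix $\hat{A}_b$ and stationary distribution $\pi^{(b)}$ (so $\pi^{(b)}_i$ is the normalized degree of vertex $i$), I will first show that once $K > D\log n$ the embedding vector $\hat{H}^{(b,K)}$ is, with high probability, within $n^{-\Omega(1)}$ of a quantity determined by $\pi^{(b)}$ and the fixed ``transformed initial embedding''; and then I will use the $\delta$-exceptional hypothesis to couple $G^{(0)}$ and $G^{(1)}$ so that $\|\pi^{(0)} - \pi^{(1)}\|_1 \le \delta + O(1/\sqrt n)$. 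Putting these together, with a Hölder step, yields both claims.

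\emph{Mixing and collapse of the embedding.} Since $W_b$ is bounded below by a positive constant, standard concentration of the adjacency/Laplacian spectrum shows that $G^{(b)}$ is, with high probability, connected, with all degrees $\Theta(n)$, and with random walk spectral gap bounded below by a constant depending only on $W_b$; hence $\|\hat{A}_b^{\,\ell} - \One\,(\pi^{(b)})^T\|$ (in the operator norm induced by $\ell_\infty$ on row vectors) is at most $n^{-c}$ once $\ell \ge D\log n$, with $c$ as large as we like by enlarging $D$ --- this is where $D\log n < K$ enters. I then propagate this rank-one collapse through the layers: left multiplication by $\hat{A}_b$ contracts differences between rows of the current embedding matrix, while right multiplication by $W^{(\ell-1)}$ and the entrywise $\sigma$ both preserve near-equality of rows ($\sigma$ is $1$-Lipschitz since $|\sigma'| \le 1$, and right multiplication is Lipschitz in $\|\cdot\|_{op,\infty}$). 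So by layer $K$ all rows of $\hat{M}^{(b,K)}$ agree up to $n^{-\Omega(1)}$ with a common row vector --- hence so does $\hat{H}^{(b,K)}$ --- and that vector is $(\pi^{(b)})^T$ applied to $Q := \hat{M}^{(b,0)}W^{(0)}\cdots W^{(K-1)}$ up to lower-order terms from the nonlinearity. To control those I Taylor-expand: since $\sigma \in C^2$ with $\sigma(0)=0$, $\sigma'(0)=1$, we have $\sigma(t) = t + \rho(t)$ with $|\rho(t)| \le \frac12\|\sigma''\|_\infty t^2$ over the range of embedding entries, a range kept bounded by $|\sigma'|\le 1$ (each layer is entrywise non-expansive up to the factor $\|W^{(\ell-1)T}\|_{op,\infty}$) together with the norm-constrained hypothesis; the accumulated quadratic remainders are summed using $\sum_j \|W^{(j)T}\|_{op,\infty} \le E$ to keep partial products of weight-matrix norms $O(1)$, and the product bound $\le C$ (with the balanced scale of the initial embedding) to make $Q$ --- and hence the whole main term --- entrywise $O(1/n)$.

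\emph{Coupling and the case $\delta > 0$.} Let $\phi$ be a measure-preserving bijection attaining $\ddeg(W_0,W_1) \le \delta$. Draw $y_1,\dots,y_n$ i.i.d.\ uniform on $[0,1]$, use them as the latent positions of $G^{(1)}$, and use $x_i := \phi(y_i)$ (again i.i.d.\ uniform) as the latent positions of $G^{(0)}$, with edges drawn conditionally independently in each graph. A Chernoff plus union bound gives, uniformly in $i$, $d^{(b)}_i = n\,d_{W_b}(z^{(b)}_i)(1 + O(\sqrt{\log n / n}))$ and $\mathrm{vol}(G^{(b)}) = n^2 D(W_b)(1 + O(1/\sqrt n))$, hence $\pi^{(b)}_i = \frac{d_{W_b}(z^{(b)}_i)}{n\,D(W_b)}(1 + O(\sqrt{\log n / n}))$. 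Summing over $i$ and using the law of large numbers with its $O(1/\sqrt n)$ fluctuation,
\[ \|\pi^{(0)} - \pi^{(1)}\|_1 = \frac1n\sum_{i=1}^{n}\left|\frac{d_{W_0}(\phi(y_i))}{D(W_0)} - \frac{d_{W_1}(y_i)}{D(W_1)}\right| + O(n^{-1/2}) \le \delta + O(n^{-1/2}). \]
Combining with the collapse step, $\hat{H}^{(0,K)} - \hat{H}^{(1,K)}$ equals, up to $n^{-\Omega(1)}$ additive error, $(\pi^{(0)} - \pi^{(1)})^T Q$, and each column of $Q$ has $\ell_\infty$-norm $O(1/n)$; Hölder's inequality ($\ell_1$ on $\pi^{(0)} - \pi^{(1)}$, $\ell_\infty$ on the columns of $Q$) then gives $\|\hat{H}^{(0,K)} - \hat{H}^{(1,K)}\|_\infty \le \frac{\delta}{n}(1 + O(1/\sqrt n))$, which is \eqref{expr:embeddings-convergence-bound-delta-nonzero}.

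\emph{The case $\delta = 0$ and the main obstacle.} When $\delta = 0$ the integrand in $\ddeg$ vanishes almost everywhere, so $d_{W_0}(\phi(y_i))/D(W_0) = d_{W_1}(y_i)/D(W_1)$ for every sampled $y_i$, and $\pi^{(0)}_i - \pi^{(1)}_i$ consists purely of the independent concentration fluctuations, each of size $O(n^{-3/2 + o(1)})$; thus $\|\pi^{(0)} - \pi^{(1)}\|_1 = O(n^{-1/2 + o(1)})$ and the Hölder bound becomes $O(n^{-3/2 + \text{const}})$, giving \eqref{expr:embeddings-convergence-bound-delta-zero}. For the per-coordinate refinement, observe that $\hat{H}^{(0,K)}_j - \hat{H}^{(1,K)}_j = \sum_i (\pi^{(0)}_i - \pi^{(1)}_i)Q_{ij}$ is, for each $j$, a sum of $n$ roughly independent near-mean-zero terms of size $O(n^{-3/2 + o(1)}) \cdot O(1/n)$; a second-moment estimate over the coordinates $j$ shows all but an $o(1)$-fraction of them are $O(1/n^2)$ in absolute value, the exceptional ones being those for which the column $Q_{\cdot j}$ happens to correlate with the fluctuation vector. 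The main obstacle is the collapse step: because the entries of the \emph{intermediate} embedding matrices are not uniformly $o(1)$, one cannot simply treat $\sigma$ as the identity layer by layer, and one must track the quadratic Taylor remainders and show that, after being pushed through the remaining layers and the final rank-one collapse, their total contribution is $o(\delta/n)$ --- which is exactly where both norm constraints (the bounded sum $E$ and the product bound $C$) are used. A secondary difficulty is the anti-concentration argument behind the ``$1 - o(1)$ fraction of coordinates'' claim.
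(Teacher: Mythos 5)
Your overall scaffolding matches the paper's: once $K \gtrsim \log n$ the random walk collapses $\hat A_b^{\,\ell}$ to the rank-one matrix $\One\,(\pi^{(b)})^T$, the latent positions are coupled via the optimal measure-preserving bijection $\phi$ so that sample degrees track $d_{W_b}$, and a first-order Taylor expansion of $\sigma$ (controlled by the two norm constraints) reduces the nonlinear case to the linear one. The paper packages this as four lemmas (an $\ell_\infty$ closeness bound on the limit matrices $\hat A^{(b)\infty}$; $\|P^t - P_\infty\|_\infty \le 2\epsilon$ once $t \ge t_{mix}(P,\epsilon)$; an $O(\log(n/\epsilon))$ mixing bound from bottleneck ratio and Cheeger; and a Taylor-remainder reduction lemma) and then assembles them by the triangle inequality, which you compress but reproduce in spirit — though note the paper actually \emph{proves} the $\log n$ mixing claim via expansion of $G$, rather than citing it.

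Where you diverge, and where there is a genuine gap, is the estimate that delivers the $\delta/n$ rate. You write $\hat H^{(0,K)}_j - \hat H^{(1,K)}_j \approx (\pi^{(0)}-\pi^{(1)})^T Q_{\cdot j}$ and apply H\"older in the pairing $\|\pi^{(0)}-\pi^{(1)}\|_1 \cdot \|Q_{\cdot j}\|_\infty$, asserting $\|Q_{\cdot j}\|_\infty = O(1/n)$. But the norm-constrained hypothesis bounds $\|\hat M^{(b,0)T}\|_{op,\infty}\prod_j \|W^{(j)T}\|_{op,\infty}$, i.e.\ absolute \emph{column sums}; it yields $\|Q_{\cdot j}\|_1 = O(1)$, not $\|Q_{\cdot j}\|_\infty = O(1/n)$. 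Indeed, the admissible choice $\hat M^{(b,0)} = I_n$ with identity weights (the very architecture in Theorem~\ref{thm:achievability}) gives $Q_{\cdot j} = e_j$, so $\|Q_{\cdot j}\|_\infty = 1$ and your H\"older step returns only $|\hat H^{(0,K)}_j - \hat H^{(1,K)}_j| \lesssim \delta$, a factor of $n$ too weak. The paper instead works in $\ell_\infty$ on the stationary side: its Lemma~\ref{lemma:exceptionality-closeness-limit-matrices} asserts $\|\pi^{(0)} - \pi^{(1)}\|_\infty = \tfrac{\delta}{n}(1+O(1/\sqrt n))$ and then propagates this through the triangle inequality $\|\hat M^{(0)K} - \hat M^{(1)K}\|_\infty \le \|\hat M^{(0)K} - \hat A^{(0)\infty}\|_\infty + \|\hat A^{(0)\infty} - \hat A^{(1)\infty}\|_\infty + \|\hat A^{(1)\infty} - \hat M^{(1)K}\|_\infty$, letting the $\ell_1$-bounded columns of $Q$ absorb the norm constraint. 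Your argument establishes only $\ell_1$ (total-variation) control on $\pi^{(0)}-\pi^{(1)}$; without the $\ell_\infty$ control you never reach the $\delta/n$ rate for general norm-constrained architectures. The same issue propagates to your $\delta = 0$ bound and to the ``$1-o(1)$ fraction of coordinates'' claim, which you acknowledge but leave as a sketch.
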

\begin{remark}
    The convergence bounds (\ref{expr:embeddings-convergence-bound-delta-nonzero})
    and (\ref{expr:embeddings-convergence-bound-delta-zero}) should be interpreted
    in light of the fact that the embedding vectors have entries on the order
    of $\Theta(1/n)$.
\end{remark}

%%%%%%%%%%%%%%%%%%%%%%%%%%%%%%%%%%%%%%%%%%%%%%%%%%
\ANM{Information-theoretic impossibility results}
\begin{theorem}[Probability of error lower bound]
    \label{thm:prob-err-lower-bound}
    \ANM{Make sure that the labels are right!}
    Consider again the setting of Theorem~\ref{thm:convergence-result}.
    %%%%%%%%%%%%%%%%%%%%%%%%%%%%
    %\paragraph{Probability of error lower bound}
    Furthermore, 
    suppose that $\epsres > \frac{\delta}{2n}$.  Let $K$ additionally satisfy
    $K \ll n^{1/2 - \epsilon_0}$, for an arbitrarily small fixed $\epsilon_0 > 0$.
    Then there exist two sequences
    $\{ \Gr_{0,n} \}_{n=1}^{\infty}, \{ \Gr_{1,n} \}_{n=1}^{\infty}$
    of random graph models such that
    \begin{itemize}
        \item
            with probability $1$, samples $G_{b,n} \sim \Gr_{b,n}$ converge in cut
            distance to $W_b$,
        \item
            When $\delta > 0$,
            the probability of error of any test in distinguishing between $W_0$
            and $W_1$ based on $H^{(b,K)}$, the $\epsres$-uniform perturbation of
            $\hat{H}^{(b, K)}$, is at least
            \begin{align}
                %\exp\left( - \frac{\text{const}\cdot \delta}{\epsres \cdot n} \right) (1 + o(1)).
                %\exp\left( - \frac{\text{const}\cdot \delta}{\epsres} \right) (1 + o(1)).
                \left( 1 - \frac{\delta}{2\epsres n} \right)^{n}
                \label{expr:err-prob-delta-nonzero}
            \end{align}
            %Finally, when 
            %$\delta = O(1/n)$ and $\epsres = \Omega(1/n)$, it becomes $\Theta(1)$.
            \ANM{Explain what this means in more detail.}
            
            When $\delta = 0$, the probability of error lower bound becomes
            \begin{align}
                \exp\left( - \frac{\text{const}}{\epsres \cdot n} \right).
                \label{expr:err-prob-delta-zero}
            \end{align}
            %\AM{Check above!  I'm not sure that the $n$ should be there!}
            %The conclusions of Theorem~\ref{thm:prob-err-identities} and \ref{thm:prob-err-arbitrary} hold with a GCN architecture that uses as
            %an activation function some $\sigma \in \Act$.      
    \end{itemize}   
\end{theorem}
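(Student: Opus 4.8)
The plan is to reduce the statement to an upper bound on the total variation distance between the laws of the two perturbed embedding vectors, and then to control that distance by combining the coupling furnished by Theorem~\ref{thm:convergence-result} with the smoothing effect of the $\epsres$-uniform perturbation. For the two sequences of models I would take $\Gr_{b,n}$ to be simply the random graph model induced by $W_b$ (equivalently, the $b$-marginal of the coupling in Theorem~\ref{thm:convergence-result}); that a sample $G_{b,n}\sim\Gr_{b,n}$ converges almost surely to $W_b$ in cut distance is then the standard graphon-sampling convergence fact. Writing $P_b$ for the law of $H^{(b,K)}$, the minimum over all tests of the probability of error under $B\sim\Bernoulli(1/2)$ is, up to a constant factor, $1-\mathrm{TV}(P_0,P_1)$ (it equals $1-\mathrm{TV}(P_0,P_1)$ under the convention, matching~\eqref{expr:err-prob-delta-nonzero}, that the \emph{probability of error} is the sum of the two conditional error probabilities), so it suffices to lower bound $1-\mathrm{TV}(P_0,P_1)$.

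The core of the argument is a conditional computation. Fix the graph-measurable pre-perturbation vectors $\hat H^{(0,K)}=u$ and $\hat H^{(1,K)}=v$ in $\R^{1\times n}$. Conditioned on these, $H^{(0,K)}$ is uniform on the box $\prod_{i=1}^n[u_i-\epsres,u_i+\epsres]$ and $H^{(1,K)}$ is uniform on $\prod_{i=1}^n[v_i-\epsres,v_i+\epsres]$, and since the intersection of two axis-aligned boxes factorizes coordinatewise,
\begin{align}
    1 - \mathrm{TV}\bigl(H^{(0,K)},H^{(1,K)}\mid u,v\bigr)
    \;=\; \prod_{i=1}^{n}\frac{\bigl(2\epsres - |u_i-v_i|\bigr)_+}{2\epsres}.
\end{align}
By joint convexity of total variation (equivalently, by composing the per-$(u,v)$ maximal coupling of the perturbations with the coupling of Theorem~\ref{thm:convergence-result}), and since $P_0,P_1$ depend only on the respective marginals of that coupling,
\begin{align}
    1 - \mathrm{TV}(P_0,P_1) \;\geq\; \E_{(u,v)}\!\left[\;\prod_{i=1}^{n}\frac{\bigl(2\epsres - |u_i-v_i|\bigr)_+}{2\epsres}\;\right],
\end{align}
the expectation being over the coupled law of $(\hat H^{(0,K)},\hat H^{(1,K)})$.

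To finish when $\delta>0$, I would restrict the expectation to the high-probability event $\mathcal{E}$ of Theorem~\ref{thm:convergence-result}. On $\mathcal{E}$ one has $\|u-v\|_\infty\leq\tfrac{\delta}{n}(1+O(1/\sqrt n))$, which is below $2\epsres$ since $\epsres>\tfrac{\delta}{2n}$, so every factor in the product is at least $1-\tfrac{\delta}{2\epsres n}(1+o(1))$; multiplying the resulting $(1-\tfrac{\delta}{2\epsres n}(1+o(1)))^{n}$ by $\Pr[\mathcal{E}]=1-o(1)$ yields~\eqref{expr:err-prob-delta-nonzero}. The mild upper bound $K\ll n^{1/2-\epsilon_0}$ on the depth enters here, ensuring that the error terms of Theorem~\ref{thm:convergence-result} and of the present analysis stay below the perturbation scale $\epsres$. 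The case $\delta=0$ is entirely analogous but invokes the sharper coordinatewise conclusion of Theorem~\ref{thm:convergence-result}---most coordinates of $u-v$ are $O(1/n^2)$, so that on $\mathcal{E}$ the quantity $\sum_{i=1}^n|u_i-v_i|$ is of order $1/n$---whereupon the inequality $1-t\geq e^{-2t}$ (applicable because each factor is bounded away from $0$) turns the product into $\exp(-\mathrm{const}/(\epsres n))$, giving~\eqref{expr:err-prob-delta-zero}.

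I expect the main obstacle to be exactly the passage from $\ell_\infty$-closeness of the (graph-dependent) embedding vectors to a total-variation bound: one must extract the \emph{product} estimate $(1-\delta/(2\epsres n))^{n}$, rather than a vacuous union-type bound of order $n\cdot\delta/(2\epsres n)$, which is precisely why the coordinatewise independence of the perturbation---and hence the factorization of the box overlap---is indispensable. Two further technical points: Theorem~\ref{thm:convergence-result} controls $\|u-v\|_\infty$ only with high probability rather than surely, so the event $\mathcal{E}$ must be carried through the conditioning; and the $\delta=0$ regime genuinely needs coordinatewise (not merely $\ell_\infty$) control of $u-v$ together with the fact that $\epsres$ is large enough, relative to those coordinatewise differences, that each perturbed box-overlap factor remains bounded away from $0$.
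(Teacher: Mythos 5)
Your overall framework is right—Le Cam's method, conditioning on the graph to get the exact box-overlap factorization of the total variation distance, then taking an expectation over the coupling—and this matches the paper's strategy (its Lemma~\ref{PerturbedDTVLemma} is precisely your coordinatewise product formula). The $\delta>0$ branch is essentially the paper's argument. However, there is a genuine gap in the $\delta=0$ case, and it traces to your choice of $\Gr_{b,n}$.

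You take $\Gr_{b,n}$ to be the plain graphon sampling model. The paper instead constructs \emph{modified} models: it samples $G'_{n,0}\sim W_0$ and $G'_{n,1}\sim W_1$, then sequentially adds and deletes edges in $G'_{n,0}$ so that, for all but $O(n^{1/2+\gamma})$ vertices, the degree of $v$ in $G_{n,0}$ matches the degree of $v$ in $G_{n,1}$ to within a constant $C$ (its Lemma~\ref{LargeSmallJustRightLemma}). This repair is not cosmetic. Under the plain coupling (shared latent positions, conditionally independent edges), even when $\delta=0$ the empirical degree of a given vertex fluctuates by $\Theta(\sqrt{n})$ \emph{independently} in the two graphs, so a typical coordinate of $\pi_{G_0}-\pi_{G_1}$ is $\Theta(n^{-3/2})$, not $O(n^{-2})$. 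Feeding that into your product bound with $\epsres=\Theta(1/n)$ gives factors $1-\Theta(n^{-1/2})$ and a product of order $\exp(-\Theta(\sqrt n))$, which is far smaller than the claimed $\exp(-\mathrm{const}/(\epsres n))=\exp(-\Theta(1))$. The repair construction exists precisely to kill these independent fluctuations and force coordinatewise differences of order $O(1/n^2)$ on nearly all coordinates.

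A second, related issue is that even if one grants the clause ``a $1-o(1)$ fraction of coordinates are $O(1/n^2)$,'' your step from it to $\sum_i|u_i-v_i|=O(1/n)$ does not follow: the remaining $o(n)$ coordinates can each be as large as $O(n^{-3/2+\mathrm{const}})$, so their contribution to the $\ell_1$ sum can be $o(n)\cdot n^{-3/2+\mathrm{const}}$, which is not $O(1/n)$ in general. The paper closes this by proving the quantitative count $O(n^{1/2+\gamma})$ of ``bad'' (large/small) vertices, which makes that leftover contribution $O(n^{-1+2\gamma})$ and hence negligible. You need an explicit bound on the number of outlier coordinates, not merely a $1-o(1)$ fraction statement, and for the plain sampling models such a bound with the $O(1/n^2)$-scale good coordinates is simply not available—hence the need to construct different $\Gr_{b,n}$ as the theorem's existential quantifier permits.
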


\begin{remark}
    When $\epsres = \Theta(1/n)$ and $\delta = \Omega(1)$, the error probability lower bound (\ref{expr:err-prob-delta-nonzero}) is exponentially decaying to $0$.  On
    the other hand, when $\epsres \gg 1/n$ and $\delta = \Omega(1)$, it becomes
    $\exp\left( -\frac{\delta}{2\epsres}\right)(1+o(1))$,
    which is $\Theta(1)$, so that $W_0$ and $W_1$ cannot be distinguished with high probability.  
    
    When $\delta = 0$ and $\epsres = \Omega(1/n)$, the probability of error
    lower bound in (\ref{expr:err-prob-delta-zero}) is $\Omega(1)$.
\end{remark}

%%%%%%%%%%%%%%%%%%%%%%%%%%%%%%%%%%%%%%%%%%%%%%%%%%%%%%%%%%%%%%%
\ANM{Achievability results}
We next turn to a positive result demonstrating the distinguishing capabilities
of very simple, linear GCNs.
\begin{theorem}[Distinguishability result]\label{thm:thm_3-7}
    Let $W_0, W_1$ denote two $\delta$-separated graphons. 
    %with the following properties:
%    \begin{itemize}
%        \item 
%            There is some fixed $\ell \in (0, 1]$ such that 
%            $W_0(x, y), W_1(x, y) \geq \ell$ for every $x, y \in [0, 1]$.
%        \item
%            The degree profiles (equivalent to the random walk stationary distributions) of sample graphs from $W_0$ and
%            $W_1$ are sufficiently different.  
%            More formally, we stipulate that
%            \begin{align}
%                \int_{0}^1 \left| \frac{d_{W_0}(\phi(x))}{D(W_0)} - \frac{d_{W_1}(x)}{D(W_1)}  \right| ~dx > \delta,
%            \end{align}
%            for some fixed $\delta > 0$ and for all measure-preserving bijections $\phi$.
%    \end{itemize}
    Then there exists a test that distinguishes with probability $1 - o(1)$ 
    between graph samples $G\sim W_0$
    and $G\sim W_1$ based on the $\epsres$-perturbed embedding vector from a
    GCN with $K$ layers, identity initial and weight matrices, and ReLU activation functions, provided that
    $K > D\log n$ for a sufficiently large $D$ and that $\epsres \leq \frac{\delta}{2n}$.
    \label{thm:achievability}
\end{theorem}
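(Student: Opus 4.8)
\emph{Proof plan.} The plan is to show that, for this particular GCN, the embedding vector essentially recovers the normalized degree sequence of the sample graph, and then to run a nearest-profile test; $\delta$-separation supplies enough of a gap between the two graphons' degree profiles to survive both sampling fluctuations and the perturbation. First, the embedding matrix is $\hat A^K$: since $d=n$ and the initial and weight matrices are $I_n$, the recurrence \eqref{GCNRecurrence} reads $\hat M^{(\ell)} = \sigma(\hat A\,\hat M^{(\ell-1)})$, and since the transition matrix $\hat A$ ($\hat A_{ij}=A_{ij}/d_i$) is entrywise nonnegative, every $\hat M^{(\ell)}$ is nonnegative and the ReLU acts as the identity, so by induction $\hat M^{(\ell)}=\hat A^{\ell}$. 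Hence $\hat H^{(K)} = \frac1n\mathbf 1^T\hat A^K$, a probability vector since $\hat A^K\mathbf 1=\mathbf 1$, and $n\hat H^{(K)}_j = \sum_i (\hat A^K)_{ij}$ is the average over starting vertices $i$ of the probability that a $K$-step walk from $i$ is at $j$. Writing $\pi$ for the stationary distribution, $\pi_j = d_j/\sum_k d_k$, I would show $n\hat H^{(K)} = n\pi + o(1)$ entrywise and, separately, that $n\pi_j = d_{W_b}(x_j)/D(W_b) + o(1)$ uniformly in $j$ when $G\sim W_b$.

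The mixing estimate is the heart of the matter. Since $W_b\ge \ell$ for a common constant $\ell>0$, with probability $1-o(1)$ every pair $u,v$ has $\Omega(n)$ common neighbors: conditionally on the latent positions of $u,v$, $|N(u)\cap N(v)|$ is a sum of $n-2$ independent Bernoulli variables of mean $\ge\ell^2$, so a Chernoff bound plus a union bound over the $\binom n2$ pairs applies. As $d_w\le n$, this gives $(\hat A^2)_{uv} \ge \beta/n$ for all $u,v$, with $\beta=\beta(\ell)>0$, i.e.\ a Doeblin minorization $\hat A^2(u,\cdot)\ge\beta\,\nu$ where $\nu$ is the uniform distribution; in particular the walk is irreducible and aperiodic with unique stationary law $\pi$. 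The standard coupling from a minorization yields $\|\hat A^{2t}(u,\cdot) - \pi\|_{TV} \le (1-\beta)^t$ for every $u$, so for $D$ large enough (depending only on $\ell$, hence on $W_0,W_1$) and $K>D\log n$ one gets $|(\hat A^K)_{ij} - \pi_j| \le n^{-3}$ for all $i,j$, hence $|n\hat H^{(K)}_j - n\pi_j| \le n^{-2} = o(1)$ uniformly. A Chernoff/union bound also gives $d_j = (n-1)d_{W_b}(x_j)(1+O(\sqrt{\log n/n}))$ uniformly and $\sum_k d_k = n(n-1)D(W_b)(1+o(1))$, so $n\pi_j = d_{W_b}(x_j)/D(W_b) + o(1)$ uniformly; combining, $n\hat H^{(K)}_j = d_{W_b}(x_j)/D(W_b) + o(1)$ for all $j$, w.h.p., when $G\sim W_b$.

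Now the test. Let $\mu_b$ be the law of $d_{W_b}(U)/D(W_b)$ for $U$ uniform on $[0,1]$, a distribution supported on $[\ell,1/\ell]$. By the previous step and the i.i.d.\ sampling of the $x_j$, the empirical distribution $\tilde\mu$ of $\{n\hat H^{(K)}_j\}_j$ is within $o(1)$ of $\mu_b$ in the $1$-Wasserstein distance $\mathcal W_1$, w.h.p.\ (matched-index coupling for the entrywise $o(1)$ error, plus a Glivenko--Cantelli rate for the empirical measure of the i.i.d.\ sample). The $\epsres$-perturbation adds i.i.d.\ uniform$[-\epsres,\epsres]$ noise to $\hat H^{(K)}$, i.e.\ uniform$[-n\epsres,n\epsres]$ noise to $n\hat H^{(K)}$; matching each perturbed point to its clean value bounds the resulting $\mathcal W_1$-shift by $\frac1n\sum_j|\mathrm{noise}_j|$, which concentrates about $\E\,|\mathrm{unif}[-n\epsres,n\epsres]| = n\epsres/2 \le \delta/4$. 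This is the one place the hypothesis $\epsres\le\delta/(2n)$ is used, and the factor $\tfrac12$ is exactly the margin that makes the argument go through. Hence the perturbed empirical law $\hat\mu$ has $\mathcal W_1(\hat\mu,\mu_b)\le\delta/4+o(1)$ when $G\sim W_b$. Finally $\ddeg(W_0,W_1)=\mathcal W_1(\mu_0,\mu_1)$, since minimizing $\int_0^1|d_{W_0}(\phi(x))/D(W_0) - d_{W_1}(x)/D(W_1)|\,dx$ over measure-preserving $\phi$ is achieved by monotone rearrangement, which is the optimal transport coupling on $\R$. Therefore, for $G\sim W_b$ and $n$ large, $\mathcal W_1(\hat\mu,\mu_b)<\delta/3$ while $\mathcal W_1(\hat\mu,\mu_{1-b}) \ge \ddeg(W_0,W_1) - \mathcal W_1(\hat\mu,\mu_b) > 2\delta/3$, so the test that precomputes $\mu_0,\mu_1$ from $W_0,W_1$ and reports whichever $b$ minimizes $\mathcal W_1(\hat\mu,\mu_b)$ errs with probability $o(1)$. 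When $\delta=0$ there is no perturbation and $\ddeg(W_0,W_1)>0$ is a fixed constant, so the same test works verbatim.

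The main obstacle is the mixing bound: certifying that $\Theta(\log n)$ layers push $\hat A^K$ onto its stationary row, with a constant controlled by the graphons alone, is what the minorization argument buys and is where the hypotheses $K>D\log n$ and the dependence $D=D(W_0,W_1)$ are consumed. An alternative is to invoke spectral concentration of $A$ about $\E A$ to keep the extreme eigenvalues of $\hat A$ bounded away from $\pm1$, but that is a heavier route. Everything else --- ReLU inertness, degree concentration, empirical-measure convergence, and the Lipschitz behaviour of $\mathcal W_1$ under the perturbation --- is routine; the only delicate point is that the perturbation budget $\epsres\le\delta/(2n)$ is spent essentially to the hilt, which is why the argument is carried out in $\mathcal W_1$ (stable under order-$n\epsres$ coordinate perturbations) rather than in a scaled sup-norm.
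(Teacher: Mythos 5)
Your argument is correct, and its skeleton --- after $\Theta(\log n)$ layers the embedding collapses onto the random-walk stationary distribution, which encodes the normalized degree profile, and $\delta$-separation keeps the degree profiles of $W_0$ and $W_1$ apart --- matches the paper's. But you diverge in two substantive ways, and in both your version is more explicit. First, for the mixing estimate you use a Doeblin minorization $(\hat A^2)_{uv} \geq \beta/n$ obtained from common-neighbor counts and a Chernoff/union bound, whereas the paper (Lemma~\ref{lemma:mixing-time-upper-bound-graphons}) lower-bounds the bottleneck ratio and passes through Cheeger's inequality; both yield an $O(\log n)$ mixing time with a graphon-dependent constant, but your route is elementary and uses no spectral machinery. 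Second, and more importantly, you actually construct the test: you compare the empirical distribution of the scaled embedding entries $\{n\hat H^{(K)}_j\}$ to the laws $\mu_0,\mu_1$ of $d_{W_b}(U)/D(W_b)$ in $\mathcal{W}_1$, observe $\ddeg(W_0,W_1)=\mathcal{W}_1(\mu_0,\mu_1)$ via monotone rearrangement, and show the $\epsres$-perturbation costs at most $n\epsres/2 \leq \delta/4$ in $\mathcal{W}_1$. The paper's proof instead shows that under the coupling of Lemma~\ref{lemma:exceptionality-closeness-limit-matrices} the two unperturbed embeddings are $\geq \delta/n$ apart in $L^\infty$ and then asserts the perturbed embeddings are distinguishable --- leaving the test statistic implicit and glossing over the exchangeability of coordinates (given a single observed graph, one does not know a priori which coordinate to inspect or against which reference vector to compare). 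Your empirical-measure test is permutation-invariant by construction and closes this gap. The only thing I would add is a sentence justifying the identity $\ddeg(W_0,W_1)=\mathcal{W}_1(\mu_0,\mu_1)$: the inequality $\geq$ is immediate because any measure-preserving $\phi$ pushes forward to a coupling of $\mu_0$ and $\mu_1$, and the inequality $\leq$ requires knowing that the monotone (optimal) transport plan on the line can be realized, or approximated arbitrarily well, by measure-preserving bijections of $[0,1]$, which is a standard rearrangement fact but not entirely free.
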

The above theorem states that distinguishable graphon pairs can be
distinguished by a GCN of moderate depth \emph{without} training.

%%%%%%%%%%%%%%%%%%%%%%%%%%%%%%%%%%%%%%%%%%%%%%%%%%%%%%%%%%%%%%
%\ANM{Concrete example theorem.}

Next, to demonstrate that the above results are not vacuous, 
we exhibit a family of stochastic block 
models that are difficult to distinguish as a result of being $0$-exceptional and are such that infinitely
many pairs of them have large cut distance.

To define the family of models, we consider the following density parameter
set: we pick a base point $P_* = (p_{*,1}, p_{*,2}, q_*)$ with all
positive numbers and then define
\begin{align*}
    \Param  
    = \left\{ P ~:~ (0, 0, 0) \prec P = P_* + \tau\cdot (\frac{1}{k_1}, \frac{k_1}{k_2^2}, \frac{-1}{k_2}) \preceq (1, 1, 1) \right\},
\end{align*}
where $\preceq$ is the lexicographic partial order, and $\tau \in \R$.
We have defined this parameter family because the corresponding SBMs
all have equal expected degree sequences.  That is, the vertex degree
distributions are identical for such pairs of graphons.

It may be checked that $\delta$ in Theorems~\ref{thm:convergence-result} and \ref{thm:prob-err-lower-bound}
is $0$ for pairs of graphons from $\Param$.  This gives the following result.
\begin{theorem}\label{Thm:distinguish}
    For any pair $W_0, W_1$ from the family of stochastic block models
    parameterized by $\Param$, there exists a $K > D\log n$, for some
    large enough positive constant $D$, such that the following statements
    hold:
    
    \paragraph{Convergence of embedding vectors}
    There is a coupling of the graphs $G^{(0)} \sim W_0$ and $G^{(1)} \sim W_1$, as $n\to\infty$ such that the embedding vectors
    $\hat{H}^{(0,K)}$ and $\hat{H}^{(1,K)}$ satisfy
    \begin{align}
        \| \hat{H}^{(0,K)} - \hat{H}^{(1,K)} \|_{\infty} = O(n^{-3/2 + \text{const}})
    \end{align}
    with probability $1 - e^{-\Theta(n)}$.
%    \ANM{Be more precise here!  Most of the coordinates differ by $O(n^{-2})$.}
    
    \paragraph{Probability of error lower bound}
    Let $K$ additionally satisfy $K \ll n^{1/2 - \epsilon_0}$, for an arbitrary small fixed
    $\epsilon_0 > 0$.  Then there exist two sequences
    $\{ \Gr_{0,n} \}_{n=1}^{\infty}$, 
    $\{ \Gr_{1,n} \}_{n=1}^{\infty}$ of random graph models
    such that
    \begin{itemize}
        \item
            with probability $1$, samples $G_{b,n} \sim \Gr_{b,n}$ converge in cut distance to $W_b$,
        \item
            the probability of error of any test in distinguishing between
            $W_0$ and $W_1$ based on $H^{(b,K)}$, the $\epsres$-uniform
            perturbation of $\hat{H}^{(b,K)}$, is lower
            bounded by
            $%\begin{align}
                \exp\left( -\frac{C}{\epsres n} \right).
            $%\end{align}
    \end{itemize}
    \label{thm:sbm-exceptionality}
\end{theorem}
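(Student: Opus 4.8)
The plan is to derive Theorem~\ref{thm:sbm-exceptionality} as a specialization of Theorems~\ref{thm:convergence-result} and \ref{thm:prob-err-lower-bound} to the family $\Param$. It suffices to verify two facts: (i) every pair $W_0, W_1$ of SBM graphons coming from $\Param$ is $0$-exceptional, so that the $\delta = 0$ branches of both theorems are in force; and (ii) for SBM sampling models the ``with high probability'' conclusion of Theorem~\ref{thm:convergence-result} sharpens to probability $1 - e^{-\Theta(n)}$. Granting (i) and (ii), the first bullet of Theorem~\ref{thm:sbm-exceptionality} is exactly (\ref{expr:embeddings-convergence-bound-delta-zero}) and the second bullet is exactly (\ref{expr:err-prob-delta-zero}) with the constant renamed.

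For (i), I would compute the degree function of the SBM graphon $W(\tau)$ with parameters $P(\tau) = P_* + \tau\,(1/k_1,\, k_1/k_2^2,\, -1/k_2)$, i.e.\ $p_1 = p_{*,1} + \tau/k_1$, $p_2 = p_{*,2} + \tau k_1/k_2^2$, $q = q_* - \tau/k_2$. Since $W(\tau)$ is block-constant, $d_{W(\tau)}$ takes just two values: $k_1 p_1 + k_2 q$ on the block-$1$ set (of measure $k_1$) and $k_1 q + k_2 p_2$ on the block-$2$ set (of measure $k_2$). Substituting the parameterization, the $\tau$-dependent terms cancel in both expressions, leaving $k_1 p_{*,1} + k_2 q_*$ and $k_1 q_* + k_2 p_{*,2}$, which do not depend on $\tau$; hence $D(W(\tau)) = \int_0^1 d_{W(\tau)}(x)\,dx$ is likewise $\tau$-independent. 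Aligning the block structures of $W_0$ and $W_1$ and taking $\phi$ to be the identity then gives $\ddeg(W_0, W_1) = 0$. (This is precisely the design of $\Param$: the direction vector $(1/k_1, k_1/k_2^2, -1/k_2)$ spans the kernel of the linear map sending a density triple to the pair of block degrees.)

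For (ii), I would revisit the proof of Theorem~\ref{thm:convergence-result} and note that when $G \sim W_b$ for an SBM graphon the edge indicators are independent Bernoulli variables, so the empirical degree sequence concentrates around its mean with Chernoff/Hoeffding tails, and (using the assumption $W \ge \ell$, which forces all degrees to be $\Theta(n)$) the spectral-gap and $O(\log n)$ mixing-time estimates invoked there also hold off an event of probability $e^{-\Theta(n)}$; union-bounding over the $O(1)$ such events preserves $1 - e^{-\Theta(n)}$. The conditions $K > D\log n$ (with $D$ depending on $W_0, W_1$) and $K \ll n^{1/2 - \epsilon_0}$ are inherited verbatim, the hypothesis $\epsres > \delta/(2n)$ of Theorem~\ref{thm:prob-err-lower-bound} is vacuous since $\delta = 0$, and the approximating sequences $\{\Gr_{b,n}\}$ and the constant in (\ref{expr:err-prob-delta-zero}) are taken directly from that theorem's proof.

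The only content beyond invoking the two earlier theorems is the cancellation in (i), which is elementary, and the bookkeeping in (ii); I expect (ii) to be the main obstacle, since it requires confirming that each probabilistic estimate used in the general-graphon proof of Theorem~\ref{thm:convergence-result} in fact enjoys exponential concentration in the independent-edge setting rather than merely $1 - o(1)$ probability. Although cut-distance separation is not part of the formal statement, I would also record that distinct parameters $\tau \neq \tau'$ yield SBMs at cut distance $\Omega(|\tau - \tau'|)$: a measure-preserving bijection can only permute the two blocks, and the difference of density triples is not annihilated by that permutation, so the cut norm of the aligned difference is bounded below by a fixed multiple of $|\tau - \tau'|$; taking $\tau$ along a well-separated sequence then produces infinitely many pairs that are simultaneously $0$-exceptional and $\Omega(1)$-far in cut distance.
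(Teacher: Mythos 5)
Your approach is identical to the paper's: the paper proves Theorem~\ref{thm:sbm-exceptionality} in one line by invoking Theorems~\ref{thm:convergence-result} and~\ref{thm:prob-err-lower-bound} after noting that all pairs in $\Param$ are $0$-exceptional, and your degree-function cancellation computation correctly supplies the verification of $0$-exceptionality that the paper leaves as ``it may be checked.'' Your worry in (ii) about upgrading ``with high probability'' to $1-e^{-\Theta(n)}$ is less of an obstacle than you suggest, since Lemmas~\ref{lemma:exceptionality-closeness-limit-matrices} and~\ref{lemma:mixing-time-upper-bound-graphons} are already stated with $e^{-\Theta(n)}$ failure probability for any graphon bounded below by $\ell>0$, so no SBM-specific sharpening is needed.
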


We note that while the set of graphon pairs for which $\delta = 0$
forms a set of measure $0$, such classification problems nonetheless
arise in practice.  We exhibit such a case in our empirical results
section.

\section{Empirical results}
\subsection{Performance on synthetic datasets}
Here we validate Theorem~\ref{Thm:distinguish} by considering GCN classification of the class of parametric stochastic block models (SBMs) through a series of experiments. Different families of $\delta$-separated SBMs are considered in our experiments. First, we consider pairs of SBMs in the class $\Param$ with a given base point, where $\delta = 0$. It is verified that SBMs in this class are not easily distinguishable when the depth of the GCN is excessively large. We generated a database of graphs by randomly sampling from two classes of SBMs with parameters $(p_1,p_2,q)=(.6,.4,.2)$, $\tau=.05$ and $(k_1,k_2)=(.5,.5)$. The complete dataset consists of 200 random graphs (100 per class), each containing 1000 nodes. The dataset was randomly split into training and test sets in the ratio 80\%/20\%. We consider four GCNs, comprised of ReLU non-linearities and two, four, six and eight layers, respectively. The models are trained on the training set until the training error saturates ($\sim 50$ epochs), and then it is validated on the test set.  When the number of layers in the GCN is small (2), the cross-entropy training error converges to zero, indicating that the GCNs are able to differentiate graphs arising from the two different SBM distributions (see Figure~\ref{fig:train_error_del0_GCN}a). The results generalize on the hold-out test class, where too, the cross-entropy loss converges to zero for the two-layered GCN architecture.  However, as predicted by Theorem~\ref{Thm:distinguish}, when the number of layers is increased sufficiently, the cross-entropy loss fails to reduce to $0$.  Additionally, it is shown in Figure~\ref{fig:train_error_del0_GCN}a that the $L^\infty$-norm of the weight matrices remains bounded during the training phase, and thus the boundedness assumption in Theorem~\ref{Thm:distinguish} holds true.
\begin{figure}
    \centering
    \includegraphics[width=0.98\columnwidth]{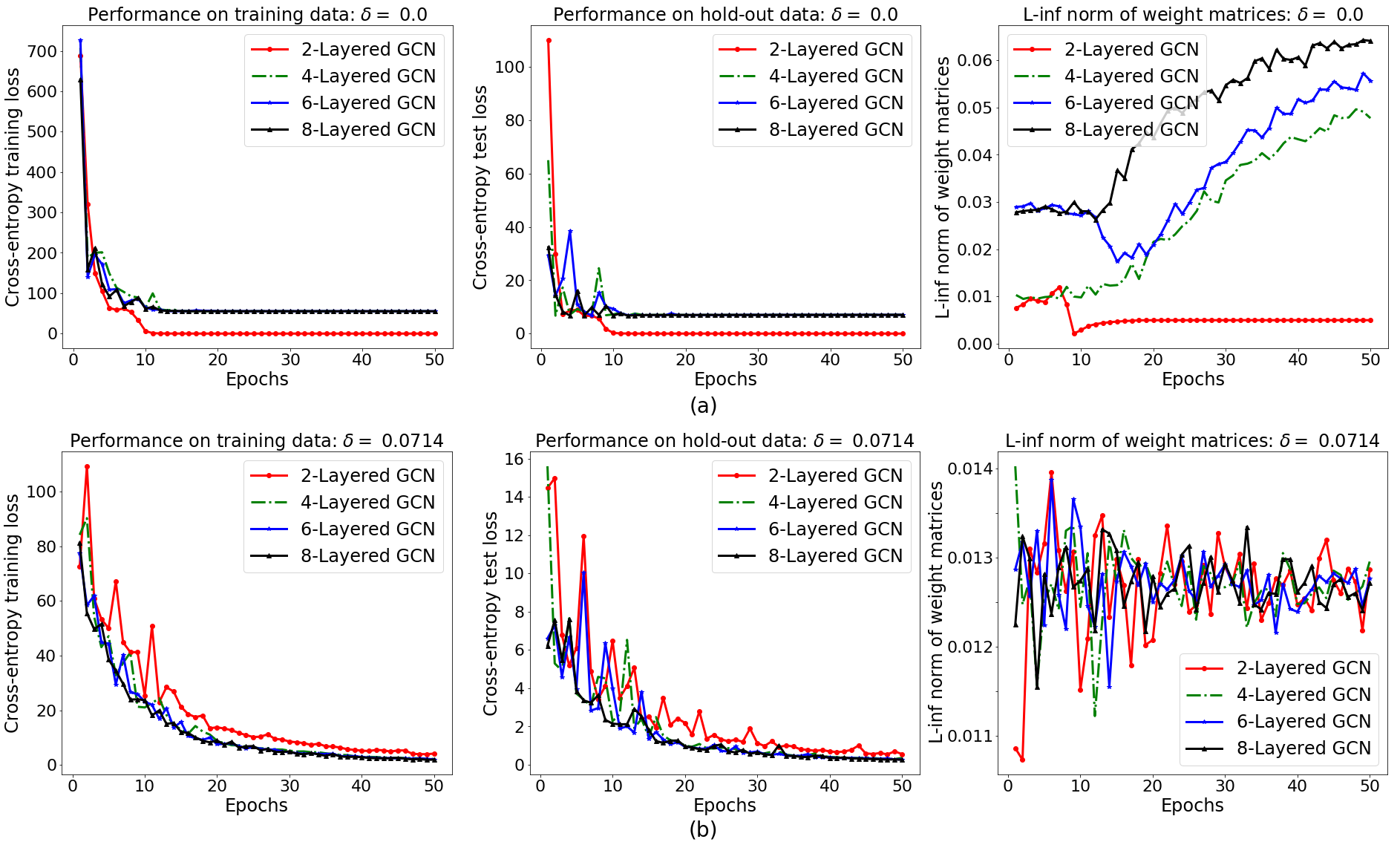}
    \caption{{\small Cross-entropy training and test losses for synthetic SBM datasets in (a) the exceptional class $\Param$, with parameters $(p_1,p_2,q)=(.6,.4,.2)$ and $(p_1',p_2',q')=(.7,.7,.1)$ for GCNs with \emph{trainable} weights for varying number of layers. As expected, distinguishability of GCNs decreases with increase in number of layers, (b) $\delta$-separated class with parameters $(p_1,p_2,q)=(.6,.4,.2)$ and $(p_1',p_2',q')=(.55,.45,.2)$ for \emph{linear} GCNs for varying number of layers. As expected, even linear GCNs with larger number of layers are also able to distinguish graphs from two classes.}}
    \label{fig:train_error_del0_GCN}
\end{figure}

Next we empirically show that if the stationary distribution distance $\delta$ between graphs sampled from two parametric SBMs is large, then the two classes are easily distinguishable by a GCN. Once again, we generated a database of graphs by randomly sampling from two classes of SBMs with parameters $(p_1,p_2,q)=(.6,.4,.2)$ and $(p_1',p_2',q')=(.55,.45,.2)$. The two classes are $\delta$-separated with $\delta=0.0714$. It is shown in Theorem~\ref{thm:thm_3-7} that if $\delta$ is sufficiently large, even a \emph{linear} GCN with identity weight matrices and identity activation functions (equivalent to ReLU in this case) can distinguish graphs from the two classes. Figure~\ref{fig:train_error_del0_GCN}b illustrates the performances of untrained (linear) GCNs on this synthetic dataset. As expected, the performance of GCNs are almost identical regardless of the number of layers, which validates the findings in Theorem~\ref{thm:thm_3-7}.

% \begin{table}
% 	\centering
% 	\caption{Performance on hold-out dataset}
% 	\begin{tabular}{cccc} 
% 		\toprule
% 		Method & Accuracy & Precision & Recall\\
% 		\midrule 
% 		2-layered GCN & 1.0 & 1.0 & 1.0\\
% 		3-layered GCN & 1.0 & 1.0 & 1.0\\
% 		4-layered GCN & 0.55 & 0 & 0\\
% 		5-layered GCN & 0.45 & 0.45 & 1.0\\
% 		\bottomrule
% 	\end{tabular}
% 	\label{tab:summary}
% \end{table}

\subsection{Performance on practical datasets}
Next, we evaluate performance of GCNs on MUTAG, a nitro compounds dataset divided into two mutagenetic classes~\cite{debnath1991structure}. The GCNs have an increasing number of layers. The original dataset consists of 188 nitroaromatic compounds, of which 80 compounds are selected based on the number of nodes. Of these 80 compounds, 53 compounds belong to the `negative' class, and 27 belong to the `positive' class. The number of nodes in the selected graphs are 1067 or higher, which puts us in the large graph regime. As shown in the tSNE plot of the degree distributions for the two classes, and the histogram of average node degrees across two classes in Fig.~\ref{fig:train_error_prac}, it is evident that the two classes are not well separated in terms of degree distributions. Thus increasing the number of layers in GCNs will have adverse effects on distinguishability according to Theorem~\ref{thm:thm3-3}. As before, we consider several GCNs with increasing number of layers - 2, 3, 4 and 5, respectively. As seen in Figure~\ref{fig:train_error_prac}, GCNs with 2 or 3 layers are able to distinguish graphs from two different classes, while cross-entropy loss for GCNs with 4 or 5 layers settles to a very large value.
\begin{figure}
    \centering
    \includegraphics[width=0.98\columnwidth]{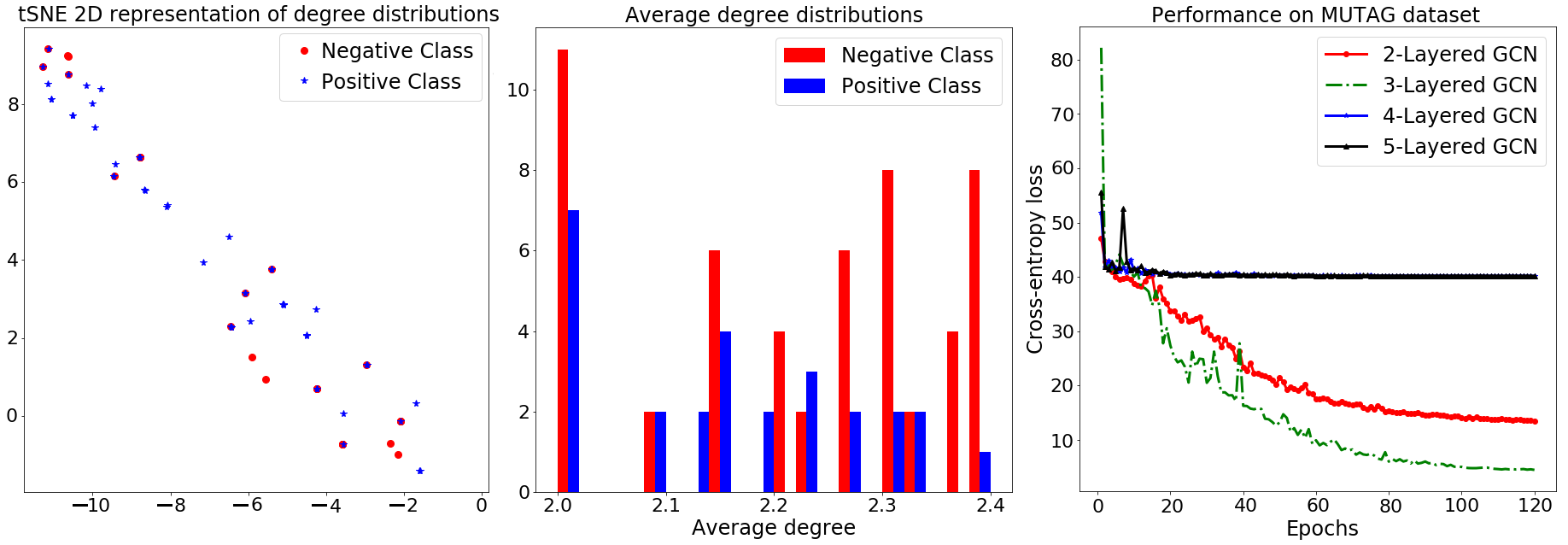}
    \caption{Cross-entropy training loss for predicting nutageneticity of nitroaromatic compounds. As expected, distinguishability of GCNs decreases with increase in number of layers.}
    \label{fig:train_error_prac}
\end{figure}

\ANM{FINISH ME!}

%%%%%%%%%%%%%%%%%%%%%%%%%%%%%%%%%%%%%%%%%%%%%%%
%\newpage
\section{Proofs}

%%%%%%%%%%%%%%%%%%%%%%%%%%%%%%%%%%%%%%%%%%%%%%%%%%%%%%%%%
\subsection{Proof of Theorem~\ref{thm:convergence-result}}
We prove this in two parts: we first show that it is true for the case of identity
activation functions.  We then give a result reducing from the case of nonlinear
activations to identity ones.

%\ANM{Prove the result for the case of identity activation functions.  Remember how this works.  
%
%- Show that a sample graph from either $W_0$ or $W_1$ will have an embedding matrix
%that converges to the same point.  To do this, we reason as follows:
%
%    -- Part 1: We first establish that the two chains have limit points that are very close
%    to one another.  This entails proving equation (73), page 23, of the old draft.  
%    
%    -- Part 2: We then upper bound the distance to the limit point in terms of the mixing time of
%    the random walk chain on $G$.  (Use Lemma 4, page 15, of the old draft!)
%    
%    -- Part 3: We finally upper bound the mixing time of the random walk on $G$.  This entails
%    generalizing Lemma 5, page 15, of the old draft.
%}

%%%%%%%%%%%%%
% Part 1
Let us first assume that the activation functions and the initial embedding and weight
matrices are all the identity.  In other words, we first study powers of random walk
matrices.  For a graph $G$ whose random walk chain is ergodic, we define $\pi_{G}$ to
be its stationary distribution.  We define $\hat{A}^{\infty}$ to be the $n\times n$
matrix whose rows are copies of the row vector $\pi_{G}$.  The following lemma translates
$\delta$-exceptionality and $\delta$-separatedness of two graphons to an upper bound on the 
$L_\infty$ distance between their random walk stationary distributions.
\begin{lemma}[$\delta$-exceptionality implies closeness of limit matrices]
    Let $G_0 \sim W_0$ and $G_1 \sim W_1$, where $\ddeg(W_0, W_1)=\delta$.
    There exists a coupling of $G_0, G_1$ such that the following holds:
    %Assume that $G_0$ and $G_1$ are coupled arbitrarily.
    If $\delta > 0$,
    \begin{align}
        \| \pi_{G_0} - \pi_{G_1} \|_{\infty}
        = \frac{\delta}{n} \cdot (1 + O(1/\sqrt{n})).
    \end{align}
    If $\delta = 0$, then
    \begin{align}
        \| \pi_{G_0} - \pi_{G_1} \|_{\infty}
        = O(n^{-3/2 + const}),
    \end{align}
    where $const$ is any arbitrarily small constant.
    All of this holds with probability $1 - e^{-\Theta(n)}$.
    \label{lemma:exceptionality-closeness-limit-matrices}
\end{lemma}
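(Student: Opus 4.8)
The plan is to exploit the explicit form of the stationary distribution of a simple random walk on a connected graph: for a graph $G$ on $n$ vertices one has $\pi_G(i) = \deg_G(i)/(2|E(G)|)$, so that, once the vertices of $G_0$ and $G_1$ have been identified by the coupling, $\|\pi_{G_0} - \pi_{G_1}\|_\infty$ is governed entirely by the closeness of the two normalized degree sequences. Since each $W_b \geq \ell > 0$, every vertex of a sample $G_b \sim W_b$ has degree $\Theta(n)$ and $G_b$ is connected (hence its walk is ergodic) with probability $1 - e^{-\Theta(n)}$, so this identity is available on the good event. I would then construct the coupling as follows: let $\phi$ be a measure-preserving bijection of $[0,1]$ that (nearly) attains the infimum defining $\ddeg(W_0, W_1) = \delta$, write $h_b = d_{W_b}/D(W_b)$ for the normalized degree functions (so $\int_0^1 h_b = 1$ and $\int_0^1 |h_0(\phi(x)) - h_1(x)|\,dx = \delta$), sample $x_1, \dots, x_n$ i.i.d.\ uniform on $[0,1]$, give vertex $i$ of $G_1$ the latent position $y_i^{(1)} = x_i$ and vertex $i$ of $G_0$ the latent position $y_i^{(0)} = \phi(x_i)$ (still i.i.d.\ uniform, as $\phi$ is measure preserving), and generate the edges of both graphs from a single family of i.i.d.\ uniform thresholds $\{U_{ij}\}_{i<j}$. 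Only the degree sequences enter the argument, but this edge coupling makes the two graphs close edge-by-edge, which is convenient in the $\delta = 0$ case.

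Next comes the concentration step. Conditionally on the latent positions, $\deg_{G_b}(i)$ is a sum of $n-1$ independent Bernoulli indicators whose conditional mean $\sum_{j\neq i} W_b(y_i^{(b)}, y_j^{(b)})$ itself concentrates around $(n-1)\,d_{W_b}(y_i^{(b)})$, and $2|E(G_b)| = \sum_i \deg_{G_b}(i)$ is a sum of $\binom n2$ independent indicators with mean $n(n-1)\,D(W_b)(1+o(1))$. Using Hoeffding's inequality together with a union bound over the $n$ vertices, I would show that on an event of probability $1 - e^{-\Theta(n)}$ (accepting a slightly generous deviation of order $n^{1/2 + \text{const}}$ in place of the sharper $\sqrt{n\log n}$) one has, uniformly in $i$ and $b$,
\begin{align}
    \pi_{G_b}(i) = \frac{d_{W_b}(y_i^{(b)})}{n\,D(W_b)} + O\!\left(n^{-3/2 + \text{const}}\right),
\end{align}
and hence
\begin{align}
    \pi_{G_0}(i) - \pi_{G_1}(i) = \frac1n\big(h_0(\phi(x_i)) - h_1(x_i)\big) + O\!\left(n^{-3/2 + \text{const}}\right).
\end{align}

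It then remains to read off the two regimes. If $\delta = 0$, the chosen $\phi$ satisfies $h_0\circ\phi = h_1$ Lebesgue-almost everywhere, so the first term on the right vanishes for every sampled $x_i$ and $\|\pi_{G_0} - \pi_{G_1}\|_\infty = O(n^{-3/2 + \text{const}})$, as claimed. If $\delta > 0$, the first term dominates the remainder; taking the maximum over $i$ and using that $\{x_i\}$ becomes dense in $[0,1]$, the $\ell_\infty$ norm is $\frac1n$ times the maximal sampled degree-profile discrepancy, which is $\frac{\delta}{n}(1 + O(1/\sqrt n))$ on the good event.

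The hard part, I expect, will be reconciling the exponentially small failure probability $e^{-\Theta(n)}$ with the sharp first-order constant: a plain Hoeffding-plus-union-bound argument controls the degree fluctuations to order $\sqrt{n\log n}$ only with polynomially small failure probability, so the stated probability forces either a looser deviation (harmless here, since it is absorbed into the $\text{const}$ exponent and the $O(1/\sqrt n)$ factor) or a more delicate analysis of the self-normalized ratio $\deg_{G_b}(i)/(2|E(G_b)|)$. A related delicate point is the passage from the graphon-level quantity $\ddeg$, an $L^1$-type distance between normalized degree functions, to an $\ell_\infty$ statement on the finite sample; this is exactly where the freedom to choose the near-optimal bijection $\phi$ must be used, and one has to check that this $\phi$ makes the sampled discrepancies uniformly of the right order rather than merely correct on average. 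The ergodicity and $\Theta(n)$-degree facts underlying the first step are routine, following from $W_b \geq \ell$ via a Chernoff bound with failure probability $e^{-\Theta(n)}$.
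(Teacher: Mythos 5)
Your proposal follows essentially the same route as the paper's proof: express $\pi_{G_b}$ via the degree formula, couple the two samples by pushing the latent positions through a (near-)optimal measure-preserving bijection $\phi$, invoke degree concentration to write $\pi_{G_b}(i) \approx d_{W_b}(y_i^{(b)})/(nD(W_b))$, and then read off the two regimes from the normalized degree-profile discrepancy $h_0\circ\phi - h_1$. The two caveats you flag at the end are in fact exactly where the paper's own proof is loosest: it derives an $L^1$/total-variation estimate $\sum_v |\pi_{G_0}(v)-\pi_{G_1}(v)| = \delta(1+O(1/\sqrt n))$ and then simply asserts the $L_\infty$ conclusion (which does not follow, since a coordinate-wise average of order $\delta/n$ does not determine the max), and it claims failure probability $e^{-\Theta(n)}$ alongside $O(1/\sqrt n)$ relative degree error, which a plain Hoeffding union bound cannot provide simultaneously.
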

\begin{remark}
    In the hypotheses of the lemma, we may alternatively assume an \emph{arbitrary} 
    coupling of $G_0$ and $G_1$, but with the vertices of $G_1$ appropriately 
    reordered.  This is equivalent to assuming that an appropriate measure-preserving
    bijection has been applied to $W_1$ and then coupling $G_0, G_1$ by taking the
    same latent vertex positions for both graphs and, conditioned on these positions,
    generating the graph structures independently of one another.  
\end{remark}
\begin{proof}[Proof of Lemma~\ref{lemma:exceptionality-closeness-limit-matrices}]
    Letting $\pi_{G_0}$ and $\pi_{G_1}$ denote stationary distributions of samples $G_0\sim W_0, G_1 \sim W_1$, and letting the vertices of $G_1$ be reordered so as to minimize the total variation distance between $\pi_{G_0}$ and $\pi_{G_1}$, we have
        \begin{align}
            2d_{TV}(\pi_{G_0}, \pi_{G_1})
            = \sum_{v \in [n]} | \pi_{G_0}(v) - \pi_{G_1}(v) |
            = \sum_{v\in [n]} \left| \frac{\deg_{G_0}(v)}{\sum_w \deg_{G_0}(w)} -   \frac{\deg_{G_1}(v)}{\sum_w \deg_{G_1}(w)} \right|.
        \end{align}
        Now, by the concentration properties of degrees in samples from graphons,
        we have that with probability $1-o(1)$ as $n\to\infty$, 
        \begin{align}
            \frac{\sum_{w} \deg_{G_b}(w)}{n^2} 
            = D(W_b)(1 + O(1/n)).
        \end{align}
        For each vertex $v$, let $x_{b,v} \in [0, 1]$ denote the uniformly randomly
        chosen coordinate corresponding to $v$ in the process of generating $G_b$.
        Then, again by the concentration properties of degrees, and conditioning
        on the values of the $x_{b,v}$ for $b \in \{0, 1\}$ and $v \in [n]$, we have
        that with high probability,
        \begin{align}
            \deg_{G_b}(v) = d_{W_b}(x_{b,v}) n (1 + O(1/\sqrt{n})).
        \end{align}
        By applying a measure-preserving bijection, we may assume that $x_{0,v} = x_{1,v} = x_v$.
        In the case where $\delta > 0$,
        we thus have that
        \begin{align}
        \sum_{v\in [n]} \left| \frac{\deg_{G_0}(v)}{\sum_w \deg_{G_0}(w)} -   \frac{\deg_{G_1}(v)}{\sum_w \deg_{G_1}(w)} \right|
        = \sum_{v\in [n]} \int_{x_v=0}^1  \left| \frac{d_{W_0}(x_v)}{n D(W_0)} - \frac{d_{W_1}(x_v)}{n D(W_1)} \right| (1 + O(1/\sqrt{n})) ~dx_v.
        \label{expr:sum-to-integral}
        \end{align}
        By our assumption that $\ddeg(W_0, W_1)=\delta$, we then have that the inner integral is 
        $\delta/n(1 + O(1/\sqrt{n}))$, so that
        \begin{align}
            \sum_{v\in [n]} \int_{x_v=0}^1  \left| \frac{d_{W_0}(x_v)}{n D(W_0)} - \frac{d_{W_1}(x_v)}{n D(W_1)} \right| (1 + O(1/\sqrt{n})) ~dx_v
            = \delta (1 + O(1/\sqrt{n})).
            \label{dTVLowerBoundAchievability}
        \end{align}
        In the case where $\delta = 0$, cancellations in the approximation of the left-hand side of (\ref{expr:sum-to-integral}) result in
        the estimate
        \begin{align}
            \sum_{v\in [n]} \left| \frac{\deg_{G_0}(v)}{\sum_w \deg_{G_0}(w)} -   \frac{\deg_{G_1}(v)}{\sum_w \deg_{G_1}(w)} \right|           
            = O(n^{-3/2}).
        \end{align}
        
        This establishes that the stationary distributions of the random walks of the
        sample graphs of the two models are bounded away from each other in  total 
        variation distance as a function of $\ddeg(W_0, W_1)$.  Thus, they are also bounded away from each other when viewed
        as vectors in the $L_{\infty}$ distance: in particular, we have that with high probability,
        \begin{align}
            \| \pi_{G_0} - \pi_{G_1}\|_{\infty} = \frac{\delta}{n} \cdot (1 + O(1/\sqrt{n}))
        \end{align}
        when $\delta > 0$ and
        \begin{align}
            \| \pi_{G_0} - \pi_{G_1}\|_{\infty}
            = O(n^{-3/2}).
        \end{align}
        This completes the proof.
\end{proof}
Lemma~\ref{lemma:exceptionality-closeness-limit-matrices} immediately implies that
with probability $1 - e^{-\Theta(n)}$, we have
\begin{align}
    \|\hat{A}^{(0)\infty} - \hat{A}^{(1)\infty} \|_{\infty}
    \leq \frac{\delta}{n} \cdot (1 + O(1/\sqrt{n}))
    \label{expr:limit-matrices-closeness-delta-nonzero}
\end{align}
if $\delta > 0$ and
\begin{align}
    \|\hat{A}^{(0)\infty} - \hat{A}^{(1)\infty} \|_{\infty}
    = O(n^{-3/2 + const}).
    \label{expr:limit-matrices-closeness-delta-zero}
\end{align}

%%%%%%%%%%%%%%%%%%%
% Part 2
Having established that the limit points of powers of random walk matrices from either
graphon will be close together with high probability, we next give a lemma upper bounding
the distance of a finite power of a random walk matrix to its limit point in terms
of the mixing time of the associated chain.
\begin{lemma}[Distance to limit of random walk matrix powers in terms of mixing times]
    Consider a Markov chain with transition matrix $P$ and stationary
    matrix $P_{\infty}$.  Let $t_{mix}(P, \epsilon)$ denote the
    $\epsilon$-total variation mixing time of $P$.  For any
    $t \geq t_{mix}(P, \epsilon)$, we have that
    \begin{align}
        \|P^t - P_{\infty}\|_{\infty} \leq 2\epsilon.
    \end{align}
    \label{lemma:DistanceToLimitMixingTimeLemma}
\end{lemma}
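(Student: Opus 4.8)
\textbf{Proof plan for Lemma~\ref{lemma:DistanceToLimitMixingTimeLemma}.}
The plan is to reduce the matrix $\ell_\infty$ bound to a row-by-row total variation estimate, which is exactly what the mixing time controls. First I would recall that, by our convention, $\|M\|_\infty$ denotes the largest absolute entry of $M$, so it suffices to bound $|(P^t - P_\infty)_{ij}|$ uniformly over pairs of states $i,j$. Next I would identify the rows: the $i$th row of $P^t$ is the distribution $P^t(i,\cdot)$ of the chain at time $t$ started from state $i$, while every row of $P_\infty$ equals the stationary distribution $\pi$. Hence $(P^t - P_\infty)_{ij} = P^t(i,j) - \pi(j)$.

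The key step is then the elementary inequality, valid for each fixed $i$,
\begin{align}
    \max_{j} \left| P^t(i,j) - \pi(j) \right|
    \;\leq\; \sum_{j} \left| P^t(i,j) - \pi(j) \right|
    \;=\; 2\, d_{TV}\!\left( P^t(i,\cdot), \pi \right),
\end{align}
using the standard identification of total variation distance with half the $\ell_1$ distance between the two probability vectors. Taking the maximum over $i$ gives $\|P^t - P_\infty\|_\infty \leq 2 \max_i d_{TV}(P^t(i,\cdot),\pi)$.

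Finally I would invoke the definition of the $\epsilon$-mixing time: $t_{mix}(P,\epsilon)$ is precisely the least $t$ for which $\max_i d_{TV}(P^t(i,\cdot),\pi) \leq \epsilon$, and this quantity is nonincreasing in $t$ (submultiplicativity of the worst-case total variation distance to stationarity). Therefore, for any $t \geq t_{mix}(P,\epsilon)$ we have $\max_i d_{TV}(P^t(i,\cdot),\pi) \leq \epsilon$, and combining with the display above yields $\|P^t - P_\infty\|_\infty \leq 2\epsilon$, as claimed. There is no real obstacle here: the statement is essentially a dictionary translation between the entrywise matrix norm and the usual total variation formulation of mixing; the only point requiring a word of care is that the mixing time is taken as the worst case over starting states, so that the bound holds simultaneously for every row.
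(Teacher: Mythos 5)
Your proof is correct and follows essentially the same route as the paper: reduce the entrywise matrix bound to the row-wise total variation distance to $\pi$, then invoke the definition of the $\epsilon$-mixing time (the paper phrases the rows via $\mu_0 = e_i^T$, you phrase them directly as $P^t(i,\cdot)$). Your explicit remark that the worst-case total variation distance is nonincreasing in $t$ — so that the bound holds for all $t \geq t_{mix}(P,\epsilon)$, not just $t = t_{mix}(P,\epsilon)$ — is a small but welcome point of care that the paper's proof leaves implicit.
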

\begin{proof}
    By definition of $t_{mix}(P, \epsilon)$, whenever $t \geq t_{mix}(P, \epsilon)$, for any initial distribution $\mu_0$ over nodes (a row
    vector),
    \begin{align}
        \frac{1}{2} \|\mu_0P^t - \pi\|_{1} \leq \epsilon.
    \end{align}
    We therefore choose $\mu_0 = e_j^T$ (i.e., the transpose of the
    $j$th standard basis vector.  We note that $\mu_0P^t$ is the
    $j$th row of $P^t$, and the above implies that every element
    of the $j$th row of $P^t$ is within $2\epsilon$ of the corresponding
    element of $\pi$.  This completes the proof.
\end{proof}

%%%%%%%%%%%%%%%%%%%
% Part 3
Finally, we give a lemma upper bounding the mixing time of the random walk on a sample
graph from either of $W_0, W_1$.
\begin{lemma}[Mixing times of graphons]
    Let $G$ be a sample graph from either $W_0$ or $W_1$.  Then there is a
    positive constant $C$ such that the mixing time of the simple random walk on $G$
    satisfies
    \begin{align}
        \Pr[t_{mix}(G,\epsilon) \geq C\log \frac{n}{\epsilon}]
        \leq e^{-\Theta(n)}.
    \end{align}
    \label{lemma:mixing-time-upper-bound-graphons}
\end{lemma}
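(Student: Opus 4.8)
\medskip\noindent\emph{Proof plan.}
The plan is to reduce the mixing time of the simple random walk on $G$ to two standard quantities — the absolute spectral gap of the walk and the smallest stationary probability — and then to show that, for a graph sampled from a graphon bounded below by $\ell>0$, both behave well except on an event of probability $e^{-\Theta(n)}$. Writing $P=D^{-1}A$ for the walk matrix (a reversible chain with $\pi_G(v)=\deg_G(v)/\sum_w\deg_G(w)$ and real eigenvalues $1=\lambda_1\ge\lambda_2\ge\cdots\ge\lambda_n$), the textbook spectral bound for reversible chains~\cite{LevinPeresWilmer2006} gives $t_{mix}(G,\epsilon)\le\gamma_*^{-1}\log\bigl(1/(\epsilon\,\pi_{\min})\bigr)$, where $\gamma_*=1-\max(\lambda_2,|\lambda_n|)$ and $\pi_{\min}=\min_v\pi_G(v)$. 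So it suffices to prove that with probability $1-e^{-\Theta(n)}$ we have $\pi_{\min}=\Omega(1/n)$ and $\gamma_*=\Omega(1)$, the constants depending only on $\ell$ (and, for $|\lambda_n|$, mildly on the graphons).

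For the easy half: each $\deg_G(v)$ is a sum of $n-1$ independent Bernoullis of mean at least $\ell$, so a Chernoff bound and a union bound over the $n$ vertices give $\tfrac{\ell}{2}n\le\deg_G(v)\le n$ for all $v$ with probability $1-e^{-\Theta(n)}$, whence $\pi_{\min}\ge(\ell n/2)/n^2=\Omega(1/n)$. For $\lambda_2$ I would invoke Cheeger's inequality, $1-\lambda_2\ge\Phi(G)^2/2$, and lower-bound the conductance $\Phi(G)$: for a fixed $S$ with $|S|=s$, the cut size $e(S,\bar S)$ is a sum of $s(n-s)$ independent Bernoullis of mean $\ge\ell$, so $e(S,\bar S)\ge\tfrac{\ell}{2}s(n-s)$ except with probability $e^{-\Theta(\ell s(n-s))}$. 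The point that makes the exponential bound go through is to take the union bound \emph{size class by size class}: there are only $\binom{n}{s}\le e^{s\log(en/s)}$ subsets of size $s$, and $s\log(en/s)=o(\ell s(n-s))$ uniformly over $1\le s\le n/2$, so the total failure probability over all subsets is still $e^{-\Theta(n)}$. On this event, intersected with the degree event, every cut satisfies $e(S,\bar S)\ge\tfrac{\ell}{2}|S||\bar S|$ while $\mathrm{vol}(S)\le n|S|$, and $\pi_G(S)\le\tfrac12$ together with near-regularity forces $|\bar S|=\Omega(n)$; hence $\Phi(G)=\Omega(\ell^2)$, so $1-\lambda_2=\Omega(1)$.

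The obstacle is controlling $|\lambda_n|$, i.e.\ ruling out near-periodicity of the non-lazy walk — the conductance argument is insensitive to it, and the naive ``dual Cheeger'' bound is ruined by singletons. I would handle it via the bipartiteness-ratio inequality $1+\lambda_n\gtrsim\bar\beta(G)^2$, where $\bar\beta(G)=\min_{A\cap B=\emptyset}\frac{2e(A,A)+2e(B,B)+e(A\cup B,\overline{A\cup B})}{\mathrm{vol}(A\cup B)}$ over disjoint vertex subsets (up to the usual normalization). For dense $G$ one argues exactly as for $\Phi$, splitting on $s:=|A\cup B|$: if $s\le n/2$ the ``leaving'' term $e(A\cup B,\overline{A\cup B})$ has mean $\ge\ell s(n-s)=\Omega(sn)$, while if $s>n/2$ one of $A,B$ has $\Omega(n)$ vertices and hence $\Omega(n^2)$ internal edges; in both regimes the ratio is $\Omega(1)$ after concentration, with the union bound again organized by $s$ over the at most $3^n$ disjoint pairs, so $\bar\beta(G)=\Omega(1)$ with probability $1-e^{-\Theta(n)}$. (Two alternatives: pass to the lazy walk $\tfrac12(I+P)$, for which $\lambda_n\ge0$ automatically and whose mixing time differs by only a constant factor; or combine an operator-norm bound $\|A-\E A\|_{op}=o(n)$ with the fact that the graphon operators of $W_0,W_1$ have least eigenvalue strictly above $-1$ — which holds precisely because $W_b\ge\ell>0$ makes them non-bipartite.) Combining these bounds, on an event of probability $1-e^{-\Theta(n)}$ we get $\gamma_*=\Omega(1)$ and $\pi_{\min}=\Omega(1/n)$, hence $t_{mix}(G,\epsilon)\le\gamma_*^{-1}\log(2n/(\ell\epsilon))\le C\log(n/\epsilon)$ for a constant $C=C(W_0,W_1)$, as claimed.
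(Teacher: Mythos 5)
Your proof follows the same architecture as the paper's: reduce $t_{mix}$ to the spectral bound $t_{mix}\le \gamma_*^{-1}\log(1/(\epsilon\pi_{\min}))$, show $\pi_{\min}=\Omega(1/n)$ by degree concentration, and lower-bound the spectral gap via Cheeger's inequality applied to the conductance, which you in turn lower-bound using the graphon's uniform lower bound $\ell$ and concentration of cut sizes. So the skeleton is identical. Where you diverge is in two places where you are more careful than the paper. First, you organize the union bound over cuts $S$ by size class and verify $\binom{n}{s}\le e^{s\log(en/s)}$ is dominated by the Chernoff tail $e^{-\Theta(\ell s(n-s))}$ uniformly in $1\le s\le n/2$; the paper applies Chernoff to a single $S$ and asserts the conclusion for the minimizing $S$ without spelling out the union bound, which is a real (if routine) step. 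Second, and more substantively, you explicitly address the absolute spectral gap $\gamma_*=1-\max(\lambda_2,|\lambda_n|)$ rather than just $1-\lambda_2$. This is a genuine gap in the paper's proof: Cheeger's inequality only controls $1-\lambda_2$, and the conductance argument says nothing about the near-bipartite eigenvalue $\lambda_n$ (which the standard LPW mixing-time bound the paper cites does require). Your bipartiteness-ratio argument (or either of your two alternatives — passing to the lazy walk, which is cleanest, or the $\|A-\E A\|_{op}=o(n)$ route using that $W_b\ge\ell$ makes the limiting graphon operator non-bipartite) supplies exactly the missing piece. The $s>n/2$ case of the bipartiteness-ratio bound and the $3^n$-pair union bound both check out numerically. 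One small mismatch to note: the paper writes ``$\pi_{\min}=O(1/n)$'' where what is needed (and what you correctly prove) is $\pi_{\min}=\Omega(1/n)$; this appears to be a typo on the paper's side, and your version is the right one.

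Net assessment: your proposal is correct, takes essentially the paper's route, and actually repairs two lacunae in it — the missing union bound over cuts and, more importantly, the unaddressed $|\lambda_n|$ control. If you were to streamline, the lazy-walk alternative you mention would let you avoid the bipartiteness-ratio machinery entirely, since the paper's statement is indifferent to a constant-factor change in $t_{mix}$ and the GCN recursion could equally be run with $\tfrac12(I+\hat A)$; but the argument as written is fine.
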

\begin{proof}
    We prove this by lower bounding the expansion (also called the 
    bottleneck ratio) of $G$.  Through Cheeger's inequality, this 
    translates to a lower bound on the spectral gap $\gamma_*$ of the random
    walk matrix $P$ (equivalently, an upper bound on the relaxation
    time $t_{rel} = 1/\gamma_*$), which in turn directly upper bounds the mixing time in 
    terms of the minimum stationary probability.
    
    Recall that the bottleneck ratio of a Markov chain with transition
    matrix $P$ with stationary distribution $\pi$ is defined as follows:
    \begin{align}
        \Phi(P) = \min_{S ~:~ \pi(S) \leq 1/2} \frac{Q(S, S^c)}{\pi(S)},
    \end{align}
    where $S$ ranges over subsets of the state space, and $Q(S, S^c)$
    is the probability that, if we start from the stationary distribution
    and take a single step, we move from $S$ to $S^c$.  This
    may be rewritten in the case of random walks on graphs as
    \begin{align}
        \Phi(P)
        = \min_{S ~:~ \pi(S) \leq 1/2} \frac{|\partial S|}{\sum_{x\in S}\deg(x)},
    \end{align}
    where $\partial S$ denotes the \emph{boundary} of $S$, which
    is the set of edges connecting nodes in $S$ and $S^c$.
    Our goal will be to show that with high probability,
    $\Phi(P) > C$, for some positive constant $C$ depending
    on $\Psi$.  This yields the following sequence of implications:
    \begin{align}
        C < \Phi(P)
        \implies C^2/2 < \gamma_*
        \implies t_{rel} \leq 2/C^2.
    \end{align}
    Here, the first implication is from Cheeger's inequality.
    Next, this implies that
    \begin{align}
        t_{mix}(P, \epsilon)
        \leq 2/C^2 \log(\frac{1}{\epsilon\pi_{min}})
        = \Theta(\log \frac{n}{\epsilon}),
    \end{align}
    where we have used the fact that $\pi_{min}$, the minimum stationary probability achieved by any vertex, satisfies $\pi_{min} = O(1/n)$.
    
    We now proceed to lower bound $\Phi(P)$.  A simple
    upper bound on the denominator that holds with probabilit $1$ 
    is as follows:
    \begin{align}
        \sum_{x\in S} 
        \deg(x)
        \leq |S|n.
        \label{DenominatorUpperBound}
    \end{align}
    This follows because the maximum degree of any vertex is $n-1 < n$.
    
    Meanwhile, to lower bound the numerator, we reason as follows.
    %To lower bound $\Phi(P)$, we need to show that \emph{every} choice
    %of $S$ with $\pi(S) \leq 1/2$ yields an objective function value greater
    %than or equal to our lower bound.  
    Let $S$ be an arbitrary such
    subset of nodes.  We will show that $|S^c|$ must be $\Theta(n)$.  This
    will imply the following lower bound on $\E[|\partial S|]$:
    \begin{align}
        \E[|\partial S|] = \sum_{v \in S} \sum_{w\in S^c} \Pr[\{v, w\} \in G]
        \geq \ell |S| |S^c|
        = \ell |S| \Theta(n)
        = \Theta(|S|n).
    \end{align}
    An application of the Chernoff bound allows us to conclude that
    $|\partial S| = \Theta(|S|n)$ with probability at least $1 - e^{-\Theta(n)}$.  Putting this together with (\ref{DenominatorUpperBound})
    yields the desired lower bound on $\Phi(P)$.
    
    To complete the proof of the lemma, we need to verify that $|S^c| = \Theta(n)$.  Since $\pi(S) \leq 1/2$ and the maximum stationary probability
    of any vertex is, with high probability $\Theta(1/n)$, we have
    \begin{align}
        1/2 \leq \pi(S^c)
        = \sum_{v\in S^c} \pi(v) \leq \frac{c}{n}|S^c|.
    \end{align}
    This implies that $|S^c| = \Omega(n)$, which completes the proof.
    
%    %with $s_1, s_2$ nodes in classes $1$ and $2$,
%    %respectively, and let $S_1$, $S_2$, $S_1'$, $S_2'$ denote the
%    %subsets of $S$ in classes $1$ and $2$ and the subsets of $S^c$
%    %in classes $1$ and $2$, respectively.
%    Then
%    \begin{align}
%        \E[|\partial S|]
%        = p_1 s_1 s_1' + p_2 s_2 s_2' + qs_1 s_2' + q s_1' s_2,
%        \label{BoundaryExpression}
%    \end{align}
%    by linearity of expectation.
%    Now, $|S^c|$ must be $\Theta(n)$, since
%    $\pi(S) \leq 1/2$ and
%    the minimum stationary probability of any vertex is, with high probability,
%    $\Theta(1/n)$.  This implies that at least one term in (\ref{BoundaryExpression}) is $\Theta(|S|n)$.
%    An application of the Chernoff bound allows us to conclude
%    that $|\partial S| = \Theta(|S|n)$ with probability at
%    least $1-e^{-\Theta(n)}$.  Putting this together with
%    (\ref{DenominatorUpperBound}) yields the desired lower
%    bound on $\Phi(P)$.  This completes the proof.
%    %Page 88.
%    %Theorem 13.10 (page 183)
%    %Theorem 12.4 (page 162)
\end{proof}

We are now ready to complete the proof of Theorem~\ref{thm:convergence-result} for the case 
of identity activations and parameter matrices.  The plan is as follows:
we upper bound the distance between the limit
matrices $\hat{A}^{(b)\infty}$ using Lemma~\ref{lemma:exceptionality-closeness-limit-matrices}.  We then upper
bound the distance between the $K$th embedding matrices and their limits
using Lemmas~\ref{lemma:DistanceToLimitMixingTimeLemma} and
\ref{lemma:mixing-time-upper-bound-graphons}.  Combining these will yield
the desired result.

We have, for some coupling of $G_0$ and $G_1$ (alternatively, in an arbitrary coupling but with the vertices of $G_1$ properly reordered),
\begin{align}
    \|\hat{M}^{(0)K} - \hat{M}^{(1)K}\|_{\infty}
    \leq \| \hat{M}^{(0)K} - \hat{A}^{(0)\infty} \|_{\infty}
    + \| \hat{M}^{(1)K} - \hat{A}^{(1)\infty}  \|_{\infty}
    + \| \hat{A}^{(0)\infty} - \hat{A}^{(1)\infty} \|_{\infty}.
\end{align}
Now, fix an $\epsilon = \epsilon(n) > 0$ to be determined.  By Lemma~\ref{lemma:mixing-time-upper-bound-graphons}, provided that we
choose the number of layers $K \geq D\log n$ for sufficiently large $D$,
and provided that $\epsilon(n)$ is at most polynomially decaying to $0$ as a
function of $n$,
the mixing time of $G_{b} \sim W_b$ satisfies
\begin{align}
    t_{mix}(\hat{A}^{(b)}, \epsilon)
    \leq K.
\end{align}
By Lemma~\ref{lemma:DistanceToLimitMixingTimeLemma}, this implies that
\begin{align}
    \| \hat{M}^{(b)K} - \hat{A}^{(b)\infty} \|_{\infty}
    \leq 2\epsilon.
\end{align}
Finally, (\ref{expr:limit-matrices-closeness-delta-nonzero}) and
(\ref{expr:limit-matrices-closeness-delta-zero}) give an upper bound
on $\| \hat{A}^{(0)\infty} - \hat{A}^{(1)\infty} \|_{\infty}$.
In order to conclude the upper bounds in the theorem statement, we set
$\epsilon(n) = 1/n^2$, so that
\begin{align}
    \| \hat{M}^{(0)K} - \hat{A}^{(0)\infty} \|_{\infty}
    + \| \hat{M}^{(1)K} - \hat{A}^{(1)\infty}  \|_{\infty}
    = O(1/n^2).
\end{align}
This completes the proof in the case of identity parameter matrices.

%%%%%%%%%%%%%%%%%%%%%%%%%%%%%%%%%%%%%%%%%%%%%%%
%\ANM{Give the lifting lemma that reduces from nonlinear to linear activations.  We
%simply use Lemma 9, page 23, of the old draft.}
We next give a lemma that generalizes the above argument to the case of norm-constrained
parameter matrices and activation functions coming from $\Act$.
 \begin{lemma}
    Consider two random walk matrices $\hat{A}^{(0)}$ and
    $\hat{A}^{(1)}$.  
 
    Let $\sigma:\R\to\R$ be in $\Act$.
    %Let $\sigma:\R \to \R \in C^2$ be an activation function, with $\sigma(0) = 1$
    %and $\sigma'(0) = 1$, with $\sigma'(x) \leq 1$ for all $x$.
    
    Furthermore, let the sequence of weight matrices $W^{(0)}, ..., W^{(K)}$ satisfy
    the norm constraints.
   
    Then, if $K \ll n^{1/2-\gamma}$ for arbitrarily small positive constant $\gamma$,
    \begin{align}
        \| \hat{M}^{(0,K)} - \hat{M}^{(1,K)} \|_{\infty}
        \leq \left\| \hat{A}^{(0)K} \hat{M}^{(0,0)} \prod_{j=0}^{K} W^{(j)} - \hat{A}^{(1)K} \hat{M}^{(1,0)} \prod_{j=0}^{K}W^{(j)} \right\|_{\infty} (1 + o(1)).
    \end{align}
    \label{lemma:nonlinearity-lemma}
\end{lemma}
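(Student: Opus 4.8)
The plan is to reduce the nonlinear recursion to the linear one \emph{layer by layer}: each application of $\sigma$ perturbs the running embedding matrix by an amount that is quadratically small in the magnitude of its argument, and, since $\hat A^{(b)}$ is non-expansive in the $\ell_\infty$ norm and the weight matrices are norm-constrained, these perturbations do not amplify as they propagate through the remaining layers. Write $Y^{(b,\ell)} := \hat A^{(b)}\hat M^{(b,\ell-1)}W^{(\ell-1)}$, so that $\hat M^{(b,\ell)} = \sigma(Y^{(b,\ell)})$, and let $N^{(b)} := \hat A^{(b)K}\hat M^{(b,0)}\prod_j W^{(j)}$ be the matrix obtained by running the recursion with $\sigma$ replaced by the identity, so that $N^{(0)} - N^{(1)}$ is exactly the quantity inside the norm on the right-hand side of the lemma. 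For $0 \le s \le K$ let $\tilde M^{(b,s)}$ be the hybrid matrix obtained by running the nonlinear recursion for the first $s$ layers and the linear one for the remaining $K-s$ layers, i.e.\ $\tilde M^{(b,s)} = \hat A^{(b)\,K-s}\hat M^{(b,s)}\prod_{j=s}^{K-1}W^{(j)}$; then $\tilde M^{(b,0)} = N^{(b)}$ and $\tilde M^{(b,K)} = \hat M^{(b,K)}$, and the whole argument reduces to bounding $\sum_{s=1}^K\|\tilde M^{(b,s)} - \tilde M^{(b,s-1)}\|_\infty$ for $b \in \{0,1\}$.

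The first and main ingredient is a uniform magnitude bound: on the event $\mathcal E$ that every vertex of each sample graph has degree $\Theta(n)$ — which, since the graphons are bounded below by a positive constant, holds with probability $1 - e^{-\Theta(n)}$ by Chernoff and a union bound — one has $\|Y^{(b,\ell)}\|_\infty = O(1/n)$ for every $1 \le \ell \le K$. Magnitude is gained only at the very first diffusion: the $(i,j)$ entry of $\hat A^{(b)}\hat M^{(b,0)}$ equals $\deg_{G_b}(i)^{-1}\sum_{k\sim i}(\hat M^{(b,0)})_{kj}$, hence is bounded by $\deg_{G_b}(i)^{-1}$ times the $\ell_1$ norm of the $j$th column of $\hat M^{(b,0)}$, which is at most $\|\hat M^{(b,0)T}\|_{op,\infty} = O(1)$; so $\|\hat A^{(b)}\hat M^{(b,0)}\|_\infty = O(1/n)$ on $\mathcal E$. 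Thereafter, left-multiplication by the row-stochastic $\hat A^{(b)}$ never increases the largest absolute entry, right-multiplication by $W^{(j)}$ multiplies it by at most $\|W^{(j)T}\|_{op,\infty}$, and — since $\sigma \in C^2$ with $\sigma(0)=0$ and $\sigma'(0)=1$ — Taylor's theorem gives $|\sigma(x)| \le (1+c|x|)|x|$ for $x$ in a fixed neighbourhood of $0$, with $c := \tfrac12\sup_{|t|\le 1}|\sigma''(t)|$, so each $\sigma$ step multiplies the largest entry by at most $1 + O(1/n)$ once the entries are $O(1/n)$. Unrolling, and using the norm constraints ($\prod_j\|W^{(j)T}\|_{op,\infty} = O(1)$ by AM--GM applied to $\sum_j\|W^{(j)T}\|_{op,\infty} \le E$, and $\|\hat M^{(b,0)T}\|_{op,\infty} = O(1)$), yields $\|\hat M^{(b,\ell)}\|_\infty \le \frac{O(1)}{n}(1+O(1/n))^\ell = O(1/n)$, where $(1+O(1/n))^\ell = \exp(O(\ell/n)) = 1+o(1)$ precisely because $\ell \le K \ll n^{1/2-\gamma}$; the same bound then holds for $Y^{(b,\ell)} = \hat A^{(b)}\hat M^{(b,\ell-1)}W^{(\ell-1)}$.

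Given this, the per-layer linearization error is quadratically small: by the same Taylor estimate $|\sigma(x)-x| \le c\,x^2$ for $|x|\le 1$, so applying $\sigma$ entrywise to $Y^{(b,\ell)}$ (whose entries are $O(1/n)\le 1$ for $n$ large) shows $\|E^{(b,\ell)}\|_\infty = O(1/n^2)$, where $E^{(b,\ell)} := \hat M^{(b,\ell)} - \hat A^{(b)}\hat M^{(b,\ell-1)}W^{(\ell-1)} = \sigma(Y^{(b,\ell)}) - Y^{(b,\ell)}$. Consecutive hybrids differ only in whether the $\sigma$ at layer $s$ is applied before the common, purely linear tail $X \mapsto \hat A^{(b)\,K-s}X\prod_{j=s}^{K-1}W^{(j)}$, so $\tilde M^{(b,s)} - \tilde M^{(b,s-1)} = \hat A^{(b)\,K-s}E^{(b,s)}\prod_{j=s}^{K-1}W^{(j)}$, whence $\|\tilde M^{(b,s)} - \tilde M^{(b,s-1)}\|_\infty \le \|E^{(b,s)}\|_\infty\prod_{j=s}^{K-1}\|W^{(j)T}\|_{op,\infty} = O(1/n^2)$, the product of weight-matrix norms over any range of indices being $O(1)$ by AM--GM. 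Telescoping gives $\|\hat M^{(b,K)} - N^{(b)}\|_\infty = O(K/n^2)$ for each $b$, and the triangle inequality yields
\begin{align*}
    \|\hat M^{(0,K)} - \hat M^{(1,K)}\|_\infty \le \|N^{(0)} - N^{(1)}\|_\infty + O(K/n^2).
\end{align*}
Since $K \ll n^{1/2-\gamma}$, the additive term is $o(n^{-3/2})$, hence $o(1)$ times $\|N^{(0)} - N^{(1)}\|_\infty$ in the regime relevant to Theorem~\ref{thm:convergence-result} — where, by Lemma~\ref{lemma:exceptionality-closeness-limit-matrices} together with the mixing-time bounds, this distance is of order $\delta/n$ when $\delta>0$ and $n^{-3/2+\mathrm{const}}$ when $\delta=0$ — which gives the stated bound $\|\hat M^{(0,K)} - \hat M^{(1,K)}\|_\infty \le \|N^{(0)} - N^{(1)}\|_\infty(1+o(1))$.

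The step I expect to be the main obstacle is the uniform-in-$\ell$ magnitude bound $\|Y^{(b,\ell)}\|_\infty = O(1/n)$: it is what makes each linearization error $O(1/n^2)$ rather than merely $O(1/n)$, and it is the point where the graphon lower bound (forcing all degrees to be $\Theta(n)$), the row-stochasticity of $\hat A^{(b)}$, the norm constraints, and the hypothesis $K \ll n^{1/2-\gamma}$ (needed to keep the cumulative $1+O(1/n)$ factors coming from $\sigma$'s failure to be exactly $1$-Lipschitz negligible) must all be combined. Once it is in hand, the Taylor bound and the telescoping are routine bookkeeping.
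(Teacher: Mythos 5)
Your proof is correct in substance but organized around a genuinely different decomposition than the paper's. The paper linearizes each layer as a \emph{multiplicative relative-error} step, writing $\hat M^{(b,\ell)} = \hat A^{(b)}\hat M^{(b,\ell-1)}W^{(\ell-1)}\bigl(1 + O(\cdot/n^2)\bigr)$ via the Taylor expansion $\sigma(x) = x(1+O(x^2))$, and then iterates the recurrence to accumulate the factor $(1 + O(1/n^2))^K = 1 + O(n^{-3/2+\gamma})$; the operator-norm bound $\|\hat M^{(b,j)T}\|_{op,\infty} = O(1)$ (proved there via $|\sigma(x)|\le|x|$) keeps the relative errors uniform across layers. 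You instead use a \emph{hybrid/telescoping} decomposition: interpolate between the fully nonlinear and fully linear runs by flipping one layer at a time, bound each single-layer linearization error $\|E^{(b,s)}\|_\infty = O(1/n^2)$ absolutely, and observe that the remaining linear tail is non-expansive in $\ell_\infty$ up to the $O(1)$ product of downstream weight norms, giving a telescoped additive error $O(K/n^2)$. Both arguments hinge on the same three facts — the uniform entry-magnitude bound $O(1/n)$ (your argument for which, using the graphon lower bound and row-stochasticity of $\hat A^{(b)}$, is clean and correct), the quadratic smallness of $\sigma(x)-x$, and the constraint $K \ll n^{1/2-\gamma}$ to keep accumulated corrections negligible. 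What the telescoping route buys is transparency: the error appears as an explicit additive term, and you correctly point out that converting $O(K/n^2)$ into a multiplicative $(1+o(1))$ factor on the right-hand side requires $\|N^{(0)}-N^{(1)}\|_\infty$ to dominate it — a dependence that the paper's relative-error framing obscures (a per-matrix relative error on $\hat M^{(b,K)}$ does not automatically become a relative error on the difference $\hat M^{(0,K)} - \hat M^{(1,K)}$, so the paper's final line implicitly relies on the same lower bound on $\|N^{(0)}-N^{(1)}\|_\infty$ that you make explicit by appealing to Lemma~\ref{lemma:exceptionality-closeness-limit-matrices}). One small caveat: your invocation of ``AM--GM'' for $\prod_j \|W^{(j)T}\|_{op,\infty} = O(1)$ over arbitrary index ranges is terse but does work — $\prod_{j\in S} a_j \le (E/|S|)^{|S|} \le e^{E/e}$ when $\sum a_j \le E$ — though one could equally cite the explicit product constraint $\|\hat M^{(b,0)T}\|_{op,\infty}\prod_j\|W^{(j)T}\|_{op,\infty}\le C$ for the full product, which is what the paper leans on.
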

\begin{proof}
    We start by noting that we can approximate $\sigma(x)$ by its first-order Taylor
    expansion:
    \begin{align}
        \sigma(x) = x + \sigma''(\xi) x^2/2
        = x\left(1 + \sigma''(\xi)x/2 \right),
    \end{align}
    where $\xi$ is some real number between $0$ and $x$.  
    
    Furthermore, note that $\sigma''(\xi) = O(x)$.  So we have
    $\sigma(x) = x\left(1 + O(x^2) \right)$.
    
    From this, we have that for each layer $\ell \in [K]$, and for each
    $b \in \{0, 1\}$
    \begin{align}
        \hat{M}^{(b,\ell)}
        &= \sigma(\hat{A}^{(b)} \hat{M}^{(b,\ell-1)} W^{(\ell-1)}) \\
        &= \hat{A}^{(b)} \hat{M}^{(b,\ell-1)} W^{(\ell-1)} 
        \cdot \left( 1 + 
            O\left( \frac{ \| \hat{M}^{(b,\ell-1)T} \|_{op,\infty}^2 \|W^{(\ell-1)T} \|_{op,\infty}^2} {n^2} \right) \right),
        \label{RecurrenceWithActivations}    
    \end{align}
    where the relative error expression comes from the fact that
    \begin{align}
        \| \hat{A}^{(b)} \hat{M}^{(b,\ell-1)} W^{(\ell-1)} \|_{\infty} 
        &\leq \| \hat{A}^{(b)}\|_{\infty} \| \hat{M}^{(b,\ell-1)T} \|_{op,\infty} \|W^{(\ell-1)T} \|_{op,\infty} \\
        &= O\left(\frac{ \| \hat{M}^{(b,\ell-1)T} \|_{op,\infty}^2 \|W^{(\ell-1)T} \|_{op,\infty}^2} {n^2} \right)
    \end{align}    
    %\AM{Small detail: Need to bound operator norms of the transposes of these matrices!}
    
    Iterating the recurrence (\ref{RecurrenceWithActivations}) gives us
    \begin{align}
        \hat{M}^{(b,\ell)}
        = \hat{A}^{(b)\ell} \hat{M}^{(b,0)} \prod_{j=0}^{\ell-1} W^{(j)}
        \cdot \prod_{j=0}^{\ell-1}  \left( 1 + O\left(\frac{ \| \hat{M}^{(b,j)T} \|_{op,\infty}^2 \|W^{(j)T} \|_{op,\infty}^2} {n^2} \right)\right).
    \end{align}
    
    Now, we will show an upper bound on $\| \hat{M}^{(b,j)T} \|_{op,\infty}$.
    We will in particular show that
    \begin{align}
        \| \hat{M}^{(b,j)T} \|_{op,\infty}
        \leq \prod_{i=0}^{j} \|W^{(i)T}\|_{op,\infty}\cdot \left(1 + O(n^{-1/2+\gamma}) \right)^j,
        \label{OperatorNormUpperBound}
    \end{align}
    for arbitrarily small $\gamma > 0$,
    which implies, by our initial assumption, that $\| \hat{M}^{(b,j)T} \|_{op,\infty}
    = O(1)$ as $n\to\infty$.
    To show this, we apply the fact that $|\sigma(x)| \leq |x|$
    for all $x$.  This implies that
    \begin{align}
        \| \hat{M}^{(b,j)T}\|_{op,\infty}
        &\leq \| W^{(j-1)T} \|_{op,\infty} \cdot \| \hat{M}^{(b,j-1)T} \|_{op,\infty} \cdot \|\hat{A}^{(b)T} \|_{op,\infty} \\
        &\leq \left( 1 + O(n^{-1/2+\gamma}) \right)  \| W^{(j-1)T} \|_{op,\infty} \cdot \| \hat{M}^{(b,j-1)T} \|_{op,\infty},
    \end{align}
    with probability $1-e^{-\Theta(n)}$.  Iterating this
    recurrence, we get 
    \begin{align}
        \| \hat{M}^{(b,j)T}\|_{op,\infty}
        \leq \left(1 + O(n^{-1/2+\gamma}) \right)^j \prod_{i=0}^{j-1} \| W^{(j-1)T}\|_{op,\infty}\cdot  \| \hat{M}^{(b,0)T} \|_{op,\infty},
    \end{align}
    as claimed.
    The inequality (\ref{OperatorNormUpperBound}) implies that as long as $\ell \ll n^{1/2-\gamma}$,
    \begin{align}
        \hat{M}^{(b,\ell)}
        = \hat{A}^{(b)\ell} \hat{M}^{(b,0)} \prod_{j=0}^{\ell-1} W^{(j)}
        \cdot \prod_{j=0}^{\ell-1} \left( 1 + O(1/n^2) \right).
    \end{align}
    %If $\ell = o(n^2)$, 
    This implies that
    \begin{align}
        \hat{M}^{(b,\ell)}
        = \hat{A}^{(b)\ell} \hat{M}^{(b,0)} \prod_{j=0}^{\ell-1} W^{(j)}
        \cdot (1 + O(n^{-3/2+\gamma})).       
    \end{align}
    Finally, this implies
    \begin{align}
        \| \hat{M}^{(0,K)} - \hat{M}^{(1,K)} \|_{\infty}
        \leq \left\|  \hat{A}^{(0)K} \hat{M}^{(0,0)} \prod_{j=0}^{K-1} W^{(j)}
          -    \hat{A}^{(1)K} \hat{M}^{(1,0)} \prod_{j=0}^{K-1} W^{(j)}  \right\|_{\infty} (1 + O(n^{-3/2+\gamma})),
    \end{align}
    so that we have reduced the problem to the linear case, as desired.
\end{proof} 

The reduction from the general, norm-constrained linear case to the case
of identity parameter matrices is a simple matter of applying operator
norm subadditivity.

Lemma~\ref{lemma:nonlinearity-lemma}, together with the proof of the simple linear case, 
completes the proof of the theorem.

%%%%%%%%%%%%%%%%%%%%%%%%%%%%%%%%%%%%%%%%%%%%%%%%
\subsection{Proof of Theorem~\ref{thm:prob-err-lower-bound}}

%\ANM{FINISH ME!  The proof should proceed as follows:
%    
%    -- We simply copy the proof of Theorem 2 of the old draft (starting on page 17).
%    It requires a few minor edits.
%}
In order to prove the probability of error lower bound, we will
need to define our models $G_{n,0}, G_{n,1}$ in such a way that they may be coupled,
with the property that almost all vertex degrees be approximately
equal in samples from both models.  We describe our construction as
follows.
\begin{itemize}
    \item
        Generate $G'_{n,0} \sim W_0$, $G'_{n,1}\sim W_1$ independently.
        Set $G_{n,1} = G'_{n,1}$.
    \item
        Fix some $C > 1$.
        In any graph $G$, we say that a vertex $v$ is $C$-\emph{small} if its degree satisfies
        \begin{align}
            \deg_G(v) < \deg_{G'_{n,1}}(v) - C.
        \end{align}
        Similarly, we say that it is $C$-\emph{large} if 
        \begin{align}
            \deg_G(v) > \deg_{G'_{n,1}}(v) + C.
        \end{align}
        All other vertices will be said to be $C$-\emph{just right}.
        Now, we will repair $G'_{n,0}$ so that its degrees are very close to those 
        of $G_{n,1}$.  
        In $G'_{n,0}$, we sequentially add in an arbitrary order all possible edges
        connecting $C$-small vertices, until no more such edges may be added.  Note
        that in the course of doing this, some vertices may cease to be $C$-small.
        We then analogously remove edges between $C$-large vertices.  This results
        in a graph which we call $G_{n,0}$.
    \item
        Finally, we are left with a set of $C$-small and a set of $C$-large vertices in $G_{n,0}$, as well as a set of $C$-just right vertices.  
        %We will show
        %below that we can add connections between the small vertices and
        %the $C$-just right vertices, and remove connections between the large
        %vertices and the $C$-just right vertices, such that the resulting graph
        %consists only of $2C$-just right vertices.  We define $G_{n,0}$ to
        %be this graph.
        We will show in Lemma~\ref{LargeSmallJustRightLemma} below that
        only an asymptotically small number of vertices are $C$-small or $C$-large.
        It will additionally be important that at most $O(n^{1/2+\gamma})$ edges per
        vertex have been modified in transforming $G'_{n,0}$ into $G_{n,0}$.
\end{itemize}
Regarding the above construction, we have the following facts.
\begin{lemma}
    The graphs $G_{n,0}$ and $G_{n,1}$ converge in cut distance to $W_0$ and
    $W_1$, respectively, with probability $1$.
\end{lemma}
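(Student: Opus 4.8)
The plan is to reduce both convergences to the standard almost-sure convergence of $W$-random graphs, and to absorb the edge surgery used to build $G_{n,0}$ into a negligible cut-norm perturbation. First I would invoke the fundamental convergence property of $W$-random graphs (see, e.g., \cite{lovaszbook}): if $G'_n \sim W$ and $W_{G'_n}$ denotes the associated step-function graphon, then $d_{cut}(W_{G'_n}, W) \to 0$ almost surely as $n\to\infty$. Applying this to $G'_{n,0} \sim W_0$ and $G'_{n,1} \sim W_1$ gives $d_{cut}(W_{G'_{n,0}}, W_0) \to 0$ and $d_{cut}(W_{G'_{n,1}}, W_1) \to 0$ with probability $1$. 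Since $G_{n,1} = G'_{n,1}$ by construction, the claim for $G_{n,1}$ is immediate, and it remains only to handle $G_{n,0}$.

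Next I would control how far the surgery moves $G'_{n,0}$. Because $G_{n,0}$ and $G'_{n,0}$ share the vertex set $[n]$, overlaying them with the identity map gives
\begin{align}
    d_{cut}(W_{G_{n,0}}, W_{G'_{n,0}}) \leq \| W_{G_{n,0}} - W_{G'_{n,0}} \|_{cut} \leq \frac{2\,|E(G_{n,0}) \triangle E(G'_{n,0})|}{n^2},
\end{align}
since each edge in the symmetric difference perturbs $\int_{S\times T}(W_{G_{n,0}} - W_{G'_{n,0}})$ by at most $1/n^2$ for any fixed measurable $S, T$, with a factor $2$ accounting for the two orderings of its endpoints. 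The surgery alters only edges between $C$-small vertices (added) and edges between $C$-large vertices (removed), and by the facts recorded just before this lemma — namely that, with overwhelming probability, only $o(n)$ vertices are ever $C$-small or $C$-large, and that at most $O(n^{1/2+\gamma})$ edges incident to any single vertex are modified (this last point to be established alongside Lemma~\ref{LargeSmallJustRightLemma}) — summing over the at most $n$ affected vertices yields
\begin{align}
    |E(G_{n,0}) \triangle E(G'_{n,0})| = O(n^{3/2+\gamma}) = o(n^2)
\end{align}
with a failure probability small enough to be summable in $n$. The triangle inequality then gives $d_{cut}(W_{G_{n,0}}, W_0) \leq d_{cut}(W_{G_{n,0}}, W_{G'_{n,0}}) + d_{cut}(W_{G'_{n,0}}, W_0) \to 0$ along these good events.

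Finally, to promote the convergence of $G_{n,0}$ to hold with probability $1$, I would note that $d_{cut}(W_{G'_{n,0}}, W_0) \to 0$ is already almost sure, and that the events $\{ |E(G_{n,0}) \triangle E(G'_{n,0})| > n^{7/4} \}$ (a concrete $o(n^2)$ threshold dominating $O(n^{3/2+\gamma})$ for $\gamma$ small) have summable probabilities, so Borel--Cantelli implies that almost surely this symmetric difference is $o(n^2)$ for all but finitely many $n$; intersecting the two almost-sure events completes the argument. The step I expect to be the main obstacle is exactly this last piece of bookkeeping: verifying that the concentration estimates underlying Lemma~\ref{LargeSmallJustRightLemma} and the per-vertex modification bound fail only with probabilities that are summable (rather than merely $o(1)$), which is what upgrades convergence in probability to the claimed convergence with probability $1$. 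The cut-norm perturbation estimate and the appeal to standard $W$-random graph convergence are routine.
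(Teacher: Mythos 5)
Your proposal is correct and takes essentially the same route as the paper: invoke almost-sure cut-distance convergence of $W$-random graphs for $G'_{n,b}$, then bound the cut-norm perturbation caused by the edge surgery using the per-vertex modification bound $O(n^{1/2+\gamma})$, which gives a cut-norm change of $O(n^{-1/2+\gamma})=o(1)$. The paper states this in one line without the explicit Borel--Cantelli bookkeeping you flag, but since the underlying degree concentration and Lemma~\ref{LargeSmallJustRightLemma} hold with probability $1-e^{-\Theta(n)}$, that upgrade is indeed routine and your caution is warranted rather than a gap.
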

\begin{proof}
    This follows directly from the cut distance convergence of 
    $G'_{n,0}$ and $G'_{n,1}$ to $W_0$ and $W_1$, along with the fact that we did not
    add or remove more than $O(n^{1/2+\gamma})$ edges incident on any given
    vertex.  In particular, this means that the density (normalized by $n^2$) 
    of edges between any
    pair of subsets of vertices was not perturbed by more than $O(n^{-1/2+\gamma})$.
\end{proof}

\begin{lemma}[Number of large, small, and just right vertices in $G_{n,0}$]
    With high probability, the numbers of large, small, and $C$-just right vertices
    in $G_{n,0}$ are $O(n^{1/2+\gamma}), O(n^{1/2+\gamma})$, and $n(1 - O(n^{-1/2+\gamma}))$, respectively, where $\gamma > 0$ is arbitrarily small.
    Moreover, all of the large and small vertices are $O(n^{1/2+\gamma})$-just right.
    \label{LargeSmallJustRightLemma}
\end{lemma}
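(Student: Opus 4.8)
The plan is to prove Lemma~\ref{LargeSmallJustRightLemma} by first establishing uniform degree concentration for $G'_{n,0}$ and $G'_{n,1}$, and then reading the three claimed bounds directly off the structure of the greedy repair procedure. I work under the coupling of the Remark following Lemma~\ref{lemma:exceptionality-closeness-limit-matrices}: $G'_{n,0}$ and $G'_{n,1}$ are built from a common vector of latent positions $x_1,\dots,x_n$, with the measure-preserving bijection applied so that the degree functions of $W_0$ and $W_1$ are aligned (in the regime where this construction is invoked — e.g.\ the family $\Param$ — the aligned degree functions in fact coincide), and, conditioned on the positions, the edges of the two graphs are mutually independent. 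Conditioned on the positions, $\deg_{G'_{n,b}}(v)$ is a sum of $n-1$ independent Bernoulli variables with mean $\sum_{w\neq v}W_b(x_v,x_w)$ and variance at most $n/4$, so Bernstein's inequality and a union bound over the $2n$ pairs $(v,b)$ give, with probability $1-e^{-\Theta(n^{\gamma})}$ and simultaneously for all $v,b$,
\begin{align}
    \left| \deg_{G'_{n,b}}(v) - \sum_{w \neq v} W_b(x_v,x_w) \right| = O(n^{1/2+\gamma}).
\end{align}
A Hoeffding bound over $w$ gives, again uniformly in $v$, $\bigl|\sum_{w\neq v}(W_0-W_1)(x_v,x_w)\bigr| = O(n^{1/2+\gamma})$, using that this sum has conditional mean $n(d_{W_0}(x_v)-d_{W_1}(x_v))=0$. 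Combining, with high probability $|\deg_{G'_{n,0}}(v)-\deg_{G'_{n,1}}(v)| = O(n^{1/2+\gamma})$ for every vertex $v$.

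Next I extract the consequences for the repair. Since $G_{n,1}=G'_{n,1}$, the bound above says every vertex is already $O(n^{1/2+\gamma})$-just right in $G'_{n,0}$. In phase~1 an edge is added at $v$ only while $v$ is still $C$-small, each addition raises $\deg(v)$ by one, and $v$ stops receiving edges once $\deg(v)\geq \deg_{G'_{n,1}}(v)-C$; hence $v$ gains at most $\deg_{G'_{n,1}}(v)-\deg_{G'_{n,0}}(v)=O(n^{1/2+\gamma})$ edges, and symmetrically at most $O(n^{1/2+\gamma})$ edges are deleted at any vertex in phase~2. This is exactly the ``$O(n^{1/2+\gamma})$ edges per vertex'' fact used elsewhere. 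It also yields the ``moreover'' clause: one checks that phase~1 never creates a $C$-large vertex and phase~2 never creates a $C$-small one, so a vertex's degree moves monotonically toward the target $\deg_{G'_{n,1}}(v)$ and is never carried past the $\pm C$ window; thus every vertex of $G_{n,0}$, including those still $C$-small or $C$-large, satisfies $|\deg_{G_{n,0}}(v)-\deg_{G'_{n,1}}(v)|\leq |\deg_{G'_{n,0}}(v)-\deg_{G'_{n,1}}(v)| = O(n^{1/2+\gamma})$.

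It remains to bound the numbers of $C$-small and $C$-large vertices of $G_{n,0}$. Let $S^-$ (resp.\ $S^+$) be the set of vertices still $C$-small (resp.\ $C$-large) after the repair. By the termination condition of phase~1, no two vertices of $S^-$ are non-adjacent in $G_{n,0}$, so $S^-$ is a clique of $G_{n,0}$; symmetrically $S^+$ is an independent set of $G_{n,0}$. By the previous paragraph $G_{n,0}$ differs from $G'_{n,0}$ in at most $O(n^{1/2+\gamma})$ edges at each vertex, so a $k$-vertex clique of $G_{n,0}$ contains at most $k\cdot O(n^{1/2+\gamma})$ pairs that are non-edges of $G'_{n,0}$, and a $k$-vertex independent set of $G_{n,0}$ contains at most $k\cdot O(n^{1/2+\gamma})$ pairs that are edges of $G'_{n,0}$. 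On the other hand, assuming $W_0$ is bounded away from $0$ and from $1$, a Chernoff bound with a union bound over $k$-subsets shows that with high probability every $k$-subset of $V(G'_{n,0})$ spans $\Omega(k^2)$ edges and $\Omega(k^2)$ non-edges simultaneously, for all $k\geq c\log n$. Comparing $\Omega(k^2)$ with $O(k\,n^{1/2+\gamma})$ forces $|S^-|,|S^+| = O(n^{1/2+\gamma})$, and therefore the number of $C$-just right vertices is $n-O(n^{1/2+\gamma}) = n(1-O(n^{-1/2+\gamma}))$, as claimed.

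The step I expect to be the main obstacle is this last one — controlling $|S^-|$ and $|S^+|$. The clean argument above relies on sample graphs from $W_0$ having no unusually large cliques or independent sets, which needs $W_0$ bounded away from both $0$ and $1$; the paper's standing assumption only supplies $W_0\geq \ell$, which handles the independent-set side ($S^+$) directly, but for the clique side ($S^-$) one must either invoke a mild nondegeneracy of $W_0$ near $1$ or else argue separately that vertices $v$ with $d_{W_0}(x_v)$ near $1$ have $O(1)$-concentrated degrees and are hence essentially never $C$-small, reducing to the clique-size estimate on the remaining vertices. A secondary point, purely bookkeeping, is to pin down in what sense the two aligned degree functions agree closely enough to justify the uniform bound $|\deg_{G'_{n,0}}(v)-\deg_{G'_{n,1}}(v)|=O(n^{1/2+\gamma})$ used in the first paragraph; everything else is routine concentration.
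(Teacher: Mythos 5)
Your argument follows the same route as the paper's: you use degree concentration to show the greedy repair touches $O(n^{1/2+\gamma})$ edges per vertex, observe that the residual $C$-small set is a clique of $G_{n,0}$ (and the $C$-large set an independent set), and then bound the size of any subset of $G'_{n,0}$ that becomes a clique (resp.\ independent set) after so few per-vertex modifications. The paper's proof is terser — it asserts the clique-size bound rather than deriving it from a $\binom{k}{2}$ vs.\ $O(k\,n^{1/2+\gamma})$ comparison, and it does not spell out the monotonicity argument for the ``moreover'' clause — but the skeleton is identical. The concern you flag is genuine and applies to the paper's own proof as well: the assertion that ``the size of the largest subset of vertices such that adding $O(n^{1/2+\gamma})$ edges per vertex yields a clique is $O(n^{1/2+\gamma})$'' requires $G'_{n,0}$ to have no large near-cliques, which needs $W_0$ bounded away from $1$; the standing hypothesis $W_0\geq\ell$ only handles the independent-set ($S^+$) side. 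The paper silently uses this additional nondegeneracy. Your proposed workaround (vertices with $d_{W_0}(x_v)$ near $1$ have tightly concentrated degrees) is not quite a complete fix as stated — a vertex can have $d_{W_0}(x_v)$ near $1$ while $W_0(x_v,\cdot)$ still has $\Theta(n)$ conditional degree variance — but the underlying point, that the clique side needs a separate argument when $W_0$ is not bounded away from $1$, is correct.
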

\begin{proof}
    We note that after the edge addition process stops, the set of small
    vertices in $G_{n,0}$ forms a clique.  If this were not so, then we could
    continue by adding at least one edge between small vertices.
    Furthermore, since, with probability exponentially close to $1$ as a function
    of $n$, the discrepancies between the degrees of corresponding vertices in 
    $G'_{n,0}$ and $G'_{n,1}$ are all at most $O(n^{1/2+\gamma})$, for any fixed
    $\gamma > 0$, we know that we added at most $O(n^{1/2+\gamma})$ edges per
    small vertex in $G'_{n,0}$ to get $G_{n,0}$.  In $G'_{n,0}$, with high
    probability, the size of the largest subset of vertices such that adding 
    $O(n^{1/2+\gamma})$ edges per vertex yields a clique is $O(n^{1/2+\gamma})$.  This
    implies that the set of small vertices in $G_{n,0}$ has cardinality $O(n^{1/2+\gamma})$.  The same holds for the set of large vertices. Finally, this implies that the number of just right vertices is 
    $n(1-O(n^{-1/2+\gamma}))$.
\end{proof}

Lemma~\ref{LargeSmallJustRightLemma} will be important in establishing our error probability lower bound.

Le Cam's method is the tool of choice for lower bounding the error
probability of any test that distinguishes between $W_0$ and $W_1$
from a sample graph $G_{n,B}$.  In particular, 
%let 
%$\Ell_b(X)$ denote the probability distribution of a random 
%variable measurable with respect to $W_b$.  Then 
Le Cam's method
requires us to upper bound the following quantity:
\begin{align}
    d_{TV}(H^{(K,0)}, H^{(K,1)}),
    \label{dTVExpression}
\end{align}
the total variation distance between the laws of the perturbed
output embedding vectors under the models corresponding to $W_0$ and $W_1$.

To upper bound the quantity in~(\ref{dTVExpression}), we will need the following lemma, 
which gives an expression for the total
variation distance between the random perturbations of two fixed 
matrices in terms of their $L_\infty$ distance.
\begin{lemma}[Exact expression for $d_{TV}$ of perturbed matrices]
    Suppose that $M^{(0)}$ and $M^{(1)}$ are $m\times n$ matrices, for
    arbitrary positive integers $m, n$.
    Suppose, further, that $\tilde{M}^{(0)}$ and $\tilde{M}^{(1)}$ are independent
    element-wise $\epsres$ perturbations of $M^{(0)}$ and $M^{(1)}$.  Then
    \begin{align}
        d_{TV}(\tilde{M}^{(0)}, \tilde{M}^{(1)})
        &= \frac{\Vol(\Cu(M^{(0)}, \epsres) ~\symmdiff~ \Cu(M^{(1)}, \epsres))}
            {2\Vol(\Cu(M^{(1)}, \epsres))} \\
        &=  \frac{\Vol(\Cu(M^{(0)}, \epsres) ~\symmdiff~ \Cu(M^{(1)}, \epsres))}
            {2\Vol(\Cu(M^{(0)}, \epsres))},  
    \end{align}
    where $\symmdiff$ denotes the symmetric difference between two sets,
    and $\Cu(M, r)$ denotes the axis-aligned hypercube of radius $r$ centered at $M$.
    This can be simplified as follows:
    \begin{align}
        d_{TV}(\tilde{M}^{(0)}, \tilde{M}^{(1)})
        = 1 - \frac{  \prod_{i,j \in [m]\times [n]} \left({2\epsres - |M^{(0)}_{i,j} - M^{(1)}_{i,j}|}\right)_{+}}
            { (2\epsres)^{mn}}.
        \label{dTVSimpleExpression}
    \end{align}
    \label{PerturbedDTVLemma}
\end{lemma}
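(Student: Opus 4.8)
The plan is to compute the total variation distance directly from the densities of the two perturbed matrices. Each entry of $\tilde{M}^{(b)}$ is an independent uniform random variable on $[M^{(b)}_{i,j}-\epsres, M^{(b)}_{i,j}+\epsres]$, so, viewing $\tilde{M}^{(b)}$ as a random point in $\R^{mn}$, its law is absolutely continuous with density $f_b(x) = (2\epsres)^{-mn}\,\Ind[x \in \Cu(M^{(b)},\epsres)]$, i.e.\ uniform over the axis-aligned hypercube $\Cu(M^{(b)},\epsres)$ of radius $\epsres$ (side length $2\epsres$) centered at $M^{(b)}$. The two densities have the same constant height $h := (2\epsres)^{-mn}$, and their supports have the same volume $V := (2\epsres)^{mn}$; these two facts are what make the final formulas clean.

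First I would invoke the standard identity $d_{TV}(\tilde{M}^{(0)},\tilde{M}^{(1)}) = \tfrac12\int_{\R^{mn}} |f_0 - f_1|$. On $\Cu(M^{(0)},\epsres)\cap\Cu(M^{(1)},\epsres)$ the integrand vanishes, while on $\Cu(M^{(0)},\epsres)\setminus\Cu(M^{(1)},\epsres)$ and on $\Cu(M^{(1)},\epsres)\setminus\Cu(M^{(0)},\epsres)$ it equals $h$. Hence $\int|f_0-f_1| = h\cdot\Vol\!\big(\Cu(M^{(0)},\epsres)\symmdiff\Cu(M^{(1)},\epsres)\big)$, and dividing by $2$ and using $h = 1/V = 1/\Vol(\Cu(M^{(b)},\epsres))$ (valid for either $b$) gives both displayed forms of the first equality.

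Next I would convert to the complementary expression $d_{TV} = 1 - \int\min(f_0,f_1) = 1 - h\cdot\Vol\!\big(\Cu(M^{(0)},\epsres)\cap\Cu(M^{(1)},\epsres)\big)$, which follows from $|f_0-f_1| = f_0+f_1-2\min(f_0,f_1)$ together with $\int f_0 = \int f_1 = 1$ (equivalently from $\Vol(A\symmdiff B) = 2V - 2\Vol(A\cap B)$). The key observation is that the intersection of two axis-aligned hypercubes factorizes over coordinates: it is the box $\prod_{(i,j)}\big([M^{(0)}_{i,j}-\epsres,M^{(0)}_{i,j}+\epsres]\cap[M^{(1)}_{i,j}-\epsres,M^{(1)}_{i,j}+\epsres]\big)$, and the length of the $(i,j)$-th interval overlap — two intervals of common length $2\epsres$ whose centers differ by $|M^{(0)}_{i,j}-M^{(1)}_{i,j}|$ — is exactly $(2\epsres - |M^{(0)}_{i,j}-M^{(1)}_{i,j}|)_+$. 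Multiplying over all $(i,j)$ and dividing by $V=(2\epsres)^{mn}$ yields formula~(\ref{dTVSimpleExpression}).

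I do not expect a genuine obstacle: the only step needing a sentence of care is the coordinatewise factorization of the hypercube intersection and the truncated form $(2\epsres - |M^{(0)}_{i,j}-M^{(1)}_{i,j}|)_+$, which correctly encodes the case where the hypercubes are disjoint in some coordinate (so the intersection volume, and hence the product, is $0$ and $d_{TV}=1$). Everything else is a direct unwinding of the definition of total variation distance between two absolutely continuous laws with uniform densities of equal height and support volume.
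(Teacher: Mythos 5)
Your proposal is correct and follows essentially the same route as the paper's proof: interpret the laws as uniform densities on axis-aligned hypercubes, express $d_{TV}$ as the volume of the symmetric difference (equivalently, one minus the intersection volume), and compute the intersection volume via the coordinatewise factorization with side lengths $(2\epsres - |M^{(0)}_{i,j}-M^{(1)}_{i,j}|)_{+}$. Your write-up is if anything a bit more careful than the paper's at the step passing from $\frac{1}{2}\int|f_0-f_1|$ to the symmetric-difference form, which the paper largely elides.
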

%%%%%%%%%%%%%
\begin{proof}
    By definition of total variation distance,
    \begin{align}
        &d_{TV}(\tilde{M}^{(0)}, \tilde{M}^{(1)}) \\
        &= \frac{1}{2} \int_{x \in \Cu(M^{(0)}, \epsres) \lunion \Cu(M^{(1)}, \epsres)}
        \left| \frac{\Ind[ x\in\Cu(M^{(0)},\epsres)  ] }{\Vol(\Cu(M^{(0)},\epsres))} - \frac{ \Ind[x \in\Cu(M^{(1)}, \epsres )  ]  }{\Vol(\Cu(M^{(1)}, \epsres ))} \right| ~dx.
    \end{align}
    %{\color{red} I is the indicator function, right? Shall we use $\mathds{1}$ instead?} Since the perturbation is uniform, we know the densities of $\tilde{M}^{(0)}$
    and $\tilde{M}^{(1)}$, and making this substitution completes the first part of the proof.  Now, we can provide a closed-form formula as follows.
   
    The volume of the hypercube $\Cu(M^{(0)}, \epsres)$ is given by
    \begin{align}
        \Vol(\Cu(M^{(0)}, \epsres))
        = (2\epsres)^{mn}.
    \end{align}
    
    To compute the volume of the symmetric difference between the two
    hypercubes, we first compute the volume of their intersection,
    then the volume of their union (which is the sum of their volumes,
    minus that of their intersection).  Then the volume of the symmetric
    difference is the volume of the union minus that of the intersection.
    
    The volume of the intersection of the two hypercubes can be computed
    by noting that it is an axis-aligned rectangle, where the length
    along the $(i, j)$ axis is given by
    \begin{align}
        \ell_{i,j} 
        = (2\epsres - |M^{(0)}_{i,j} - M^{(1)}_{i,j} |)_{+},
        %\left(\frac{\epsres - |M^{(0)}_{i,j} - M^{(1)}_{i,j}|}{2}\right)_{+}.
        %\min\{ |M^{(0)}_{i,j} - M^{(1)}_{i,j}|, \epsres \},
    \end{align}
    so that the volume of the intersection is
    \begin{align}
        \Vol(\Cu(M^{(0)}, \epsres) \lintersect \Cu(M^{(1)}, \epsres))
        = \prod_{i,j \in [m]\times [n]} \ell_{i,j}.
    \end{align}
 
    This implies
    \begin{align}
        \Vol(\Cu(M^{(0)}, \epsres) ~\symmdiff~ \Cu(M^{(1)}, \epsres))
        = 2\cdot (2\epsres)^{mn} - 2\prod_{i,j \in [m]\times [n]} \ell_{i,j}.
    \end{align}
    
    Finally, we get the following simplified expression for the total
    variation distance:
    \begin{align}
        d_{TV}(\tilde{M}^{(0)}, \tilde{M}^{(1)})
        = 1 - \frac{  \prod_{i,j \in [m]\times [n]} \ell_{i,j}   }
            { (2\epsres)^{mn}}.
    \end{align}
    This completes the proof.
\end{proof}

Now, we use the expression in (\ref{dTVSimpleExpression}) to complete the proof of the 
theorem as follows.  We set $\tilde{M}^{(0)}$ and $\tilde{M}^{(1)}$ in Lemma~\ref{PerturbedDTVLemma} to be $\hat{H}^{(K,0)}$ and $\hat{H}^{(K,1)}$,
respectively.  Thus, in the lemma, $m=1$ and $n$ is the number of vertices in 
the sample graphs.  We need to upper bound the differences between 
$\hat{H}^{(K,0)}_j$ and $\hat{H}^{(K,1)}_{j}$.  To do this, we note that
the stationary distributions $\hat{H}^{(\infty,0)}$ and $\hat{H}^{(\infty,1)}$
of the random walks on $G_{n,0}$ and $G_{n,1}$, respectively, have the following
structure, from Lemma~\ref{LargeSmallJustRightLemma}: for $O(n^{1/2 + \delta})$
indices $j\in[n]$, which correspond to large and small vertices in $G_{n,0}$, we have
\begin{align}
    |\hat{H}^{(\infty,0)}_{j} - \hat{H}^{(\infty,1)}_j|
    = O(n^{1/2+\delta} / n^2)
    = O(n^{-3/2+\delta}).
\end{align}
The remaining $n(1 - O(n^{-1/2+\delta}))$ indices $j$ corresponding
to $C$-just right vertices in $G_{n,0}$ satisfy
$%\begin{align}
    |\hat{H}^{(\infty,0)}_{j} - \hat{H}^{(\infty,1)}_j|
    = O(1/n^2).
$%\end{align}
Furthermore, from Lemma~\ref{lemma:DistanceToLimitMixingTimeLemma}
with $\epsilon = O(1/n^2)$, provided that we choose
\newline
$K > \max\{ t_{mix}(\hat{A}^{(0)}, \epsilon), t_{mix}(\hat{A}^{(1)}, \epsilon)\}$, we have that $\hat{H}^{(K,0)}$ and $\hat{H}^{(K,1)}$
have the same structure.  Finally, from Lemma~\ref{lemma:mixing-time-upper-bound-graphons}, and noting that our 
construction of $G_{n,0}$ did not substantially alter its mixing time,
we see that $K$ only needs to be $D\log n$, for some large enough $D$.

Applying Lemma~\ref{PerturbedDTVLemma}, we see that the total variation distance
is upper bounded by
\begin{align}
    &1 - \left( 1 - \frac{C}{2\epsres n^2} \right)^{n} 
    \cdot \left( 1 - O\left( \frac{n^{1/2+\delta}}{\epsres n^2} \right) \right)^{n^{1/2+\delta}} \\
    &= 1 - \left( 1 - \frac{C}{2\epsres n^2} \right)^{n} 
    \cdot \left( 1 - O\left( \frac{n^{-3/2+\delta}}{\epsres} \right) \right)^{n^{1/2+\delta}} \\
    &= 1 - \exp\left(-\frac{C}{2\epsres n}\right)(1 + o(1)),
\end{align}
which is strictly less than $1$, as desired.  
We note that by a standard
fact about the total variation distance, this translates to a lower bound
of $e^{-\frac{C}{2\epsres n}}(1 + o(1))$ for the optimal error probability.
%\AM{Need to check the proofs above to make sure that we're clearly stating how we're using the hypotheses.}

%%%%%%%%%%%%%%%%%%%%%%%%%%%%%%%%%%%%%%%%%%%%%%%%%%%%%%%%%%
\subsection{Proof of Theorem~\ref{thm:achievability}}
\ANM{EDITED UP TO HERE!}
\ANM{FINISH ME!  The achievability result follows by invoking Lemma~\ref{lemma:exceptionality-closeness-limit-matrices} (for the separation of the limit matrices),
followed by Lemma~\ref{lemma:DistanceToLimitMixingTimeLemma} and Lemma~\ref{lemma:mixing-time-upper-bound-graphons}.}

To prove this, we will show that because $W_0$ and $W_1$ are 
$\delta$-separated, the (unperturbed) output embedding vectors of a GCN with sufficiently
many layers and with all parameter matrices equal to the identity will be bounded away from one another in the $L_{\infty}$ norm.
This will allow us to lower bound the total variation distance between
the distributions of the perturbed output embedding vectors, which implies
the existence of a test distinguishing between $W_0$ and $W_1$ that succeeds
with high probability.

In particular, let $G_0 \sim W_0, G_1 \sim W_1$ under the coupling guaranteed by Lemma~\ref{lemma:exceptionality-closeness-limit-matrices}. %an arbitrary coupling.
Then Lemma~\ref{lemma:exceptionality-closeness-limit-matrices}
implies that 
\begin{align}
    \| \hat{A}^{(0)\infty} - \hat{A}^{(1)\infty}\|_{\infty}
    \geq \frac{\delta}{n} \left(1 + O(1/\sqrt{n}) \right).
\end{align}
Furthermore, if the number of layers $K$ of the GCN is at least
$D\log n \geq t_{mix}(\hat{A}^{(b)}, 1/n^2)$, then with high probability,
\begin{align}
    \| \hat{A}^{(b)K} - \hat{A}^{(b)\infty} \|_{1}
    \leq 1/n^2 \implies
    \| \hat{A}^{(b)K} - \hat{A}^{(b)\infty} \|_{\infty}
    \leq 1/n^2.
\end{align}
We thus have that
\begin{align}
    \| \One^T \hat{A}^{(0)K}/n - \One^T \hat{A}^{(1)K}/n \|_{\infty} \geq \delta/n + O(1/n^{3/2}).
\end{align}
This implies that if $\epsres < \frac{\delta}{2n}$, with probability $1-o(1)$, the
two output embedding vectors can be distinguished.  This completes the proof.

%\ANM{REVISE BELOW}
%    Having established the two necessary facts, the proof is complete, since we may set
%    $\epsilon$ to be $1/n^2$.  If the number of layers $K$ is at least $D\log n \geq t_{mix}(\hat{A}^{(b)}, \epsilon)$
%    for a large enough $D$, then, just as in the proof of Lemma~\ref{DistanceToLimitMixingTimeLemma}, with high probability, 
%    \begin{align}
%        \| \hat{A}^{(b)K} - \hat{A}^{(b)\infty} \|_1 \leq 1/n^2
%        \implies \| \hat{A}^{(b)K} - \hat{A}^{(b)\infty} \|_\infty \leq 1/n^2.
%    \end{align}
%    Now, since, as we showed above,
%    \begin{align}
%        \| \One^T \hat{A}^{(0)\infty}/n - \One^T \hat{A}^{(1)\infty}/n \|_{\infty}
%        \geq \delta/n,
%    \end{align}
%    we have that 
%    \begin{align}
%        \| \One^T \hat{A}^{(0)K}/n - \One^T \hat{A}^{(1)K}/n \|_{\infty} \geq \delta/n + O(1/n^2).
%    \end{align}
%    This implies that if $\epsres < \frac{\delta}{2n}$, with probability $1-o(1)$, the
%    two output embedding vectors can be distinguished.  This completes the proof.
%\ANM{END REVISE}

%%%%%%%%%%%%%%%%%%%%%%%%%%%%%%%%%%%%%%%%%%%%%%%%%%%%%%%%%
\subsection{Proof of Theorem~\ref{thm:sbm-exceptionality}}

This follows simply from the fact that all pairs of graphons parameterized
by $\Param$ are $0$-exceptional and that they satisfy the hypotheses of
Theorems~\ref{thm:convergence-result} and \ref{thm:prob-err-lower-bound}.

\section{Conclusions and future work}
We have shown conditions under which GCNs are fundamentally
capable/incapable of distinguishing between sufficiently well-
separated graphons.

It is worthwhile to discuss what lies ahead for the theory of
graph representation learning in relation to the problem of distinguishing
distributions on graphs.  As the present paper is a first step, we have left
several directions for future exploration.  
Most immediately, although we have proven impossibility results for GCNs with
nonlinear activation functions, we lack a complete
understanding of the benefits of more general
ways of incorporating nonlinearity.  We have shown
that architectures with too many layers cannot reliably
be used to distinguish between graphons coming from a certain exceptional
class.  It would be of interest to determine if more general ways of incorporating
nonlinearity are 
able to generically distinguish between any sufficiently well-separated pair of 
graphons, whether or not they come from the exceptional class.  To this end, we are exploring
results indicating that replacing the random walk matrix $\hat{A}$ in the GCN
architecture with the transition matrix of a related Markov chain with the same
graph structure as the input graph $G$ results in a linear GCN that
is capable of distinguishing graphons generically.

Furthermore, a clear understanding of the role played by the embedding dimension
would be of interest.  In particular, we suspect that decreasing the embedding
dimension results in worse graphon discrimination performance.  
Moreover, a more precise
understanding of how performance parameters scale with the embedding dimension
would be valuable in GCN design.

Finally, we note that in many application domains, graphs are typically sparse.
Thus, we intend to generalize our theory to the sparse graph setting by replacing
graphons, which inherently generate dense graphs, with suitable nonparametric
sparse graph models, e.g., as captured by \emph{graphexes}.

\bibliographystyle{IEEEtran}
\bibliography{gnn-bib}

\end{document}